\theoremstyle{definition}
\theoremstyle{plain}
\newtheorem{theorem}{Theorem}[section]
\theoremstyle{remark}
\newtheorem{remark}{Remark}
\newcommand{\dx}{\,\mathrm{d}}
\newcommand{\C}{\mathbb{C}}
\newcommand{\R}{\mathbb{R}}
\newcommand{\N}{\mathbb{N}}
\newcommand{\E}{\mathbb{E}}
\newcommand{\Prob}{\mathbb{P}}
\newcommand{\Tmax}{T_\mathrm{max}}
\newcommand{\Id}{\mathrm{Id}}
\newcommand{\loss}{\ell}
\newcommand{\init}{\mathrm{init}}
\newcommand{\emp}{\mathrm{emp}}
\newcommand{\PSNR}{\mathrm{PSNR}}
\DeclareMathOperator*{\argmax}{argmax}
\DeclareMathOperator{\proj}{proj}
\begin{document}

\title{Total Deep Variation: A Stable Regularizer for Inverse Problems}

\author{Erich~Kobler, Alexander~Effland, Karl~Kunisch and~Thomas~Pock}

\maketitle

\begin{abstract}
Various problems in computer vision and medical imaging can be cast as inverse problems.
A frequent method for solving inverse problems is the variational approach, which amounts to minimizing an energy composed of a data fidelity term and a regularizer.
Classically, handcrafted regularizers are used, which are commonly outperformed by state-of-the-art deep learning approaches.
In this work, we combine the variational formulation of inverse problems with deep learning by introducing the data-driven general-purpose total deep variation regularizer.
In its core, a convolutional neural network extracts local features on multiple scales and in successive blocks.
This combination allows for a rigorous mathematical analysis including an optimal control formulation of the training problem in a mean-field setting
and a stability analysis with respect to the initial values and the parameters of the regularizer.
In addition, we experimentally verify the robustness against adversarial attacks and numerically derive upper bounds for the generalization error.
Finally, we achieve state-of-the-art results for numerous imaging tasks.
\end{abstract}

\section{Introduction}
\label{sec:introduction}
The statistical viewpoint of linear inverse problems accounts for measurement uncertainties and loss of information in the observations~$z$ in a rigorous framework.
Bayes' theorem states that the posterior probability~$p(x\vert z)$ is proportional to the product of the data likelihood~$p(z\vert x)$ and the prior~$p(x)$, i.e.
\[
p(x\vert z)\propto p(z\vert x)p(x),
\]
and represents the belief in a distinct solution~$x$ given~$z$.
Typically, solutions are computed by maximizing the posterior probability, which yields the maximum a posterior (MAP) estimator.
In a negative log-domain, the MAP estimator amounts to minimizing the variational problem
\[
\mathrm{E}(x,z)\coloneqq\mathrm{D}(x,z)+\mathrm{R}(x),
\]
where the data fidelity term~$\mathrm{D}$ is identified with~the negative log-likelihood~$-\log p(z\vert x)$ and the regularizer~$\mathrm{R}$ corresponds to the negative log of the prior distribution~$-\log p(x)$.
In this paper, we assume that the observations are generated by a linear inverse problem of the form
\[
z=Ay+\xi,
\]
where $y$ is the unknown ground truth, $A$ is a known task-dependent linear operator and $\xi$ is additive noise.
For example, $A$ is the identity matrix in the case of denoising, and it is a downsampling operator in the case of single image super-resolution.
While the data fidelity term is straightforward to model, there has been much effort to design a regularizer that captures the complexity of the statistics of natural images.

A classical and widely used regularizer is the total variation~(TV) originally proposed in~\cite{RuOsFa92}, which is based on the first principle assumption that images are piecewise constant with sparse gradients.
A well-known caveat of the sparsity assumption of TV is the formation of clearly visible artifacts known as staircasing effect.
To overcome this problem, the first principle assumption has later been extended to piecewise smooth images incorporating higher order image derivatives such as
infimal convolution based models~\cite{ChLi97} or the total generalized variation~\cite{BrKu10}.
Inspired by the fact that edge continuity plays a fundamental role in the human visual perception, regularizers penalizing the curvature of level lines have been proposed in~\cite{NiMu93,ChKa02,ChPo19}.
While these regularizers are mathematically well-understood, the complexity of natural images is only partially reflected in their formulation.
For this reason, handcrafted variational methods have nowadays been largely outperformed by purely data-driven methods as predicted by Levin and Nadler~\cite{LeNa11} a decade ago.

It has been recognized quite early that a proper statistical modeling of regularizers should be based on learning~\cite{ZhWu98},
which has recently been advocated e.g.~in~\cite{LuOk18,LiSc20}.
One of the most successful early approaches is the Fields of Experts (FoE) regularizer~\cite{RoBl09}, which can be interpreted as a generalization of the total variation, but builds upon learned filters and learned potential functions.
While the FoE prior was originally learned generatively, it was shown in~\cite{SaTa09} that a discriminative learning via implicit differentiation yields improved performance.
A computationally more feasible method for discriminative learning is based on unrolling a finite number of iterations of a gradient descent algorithm~\cite{Do12}.
Additionally using iteration dependent parameters in the regularizer was shown to significantly increase the performance (TNRD~\cite{ChPo17}, \cite{Le16}).
In~\cite{KoKl17}, variational networks (VNs) are proposed, which give an incremental proximal gradient interpretation of TNRD.
Interestingly, such truncated schemes are not only computationally much more efficient, but are also superior in performance with respect to the full minimization.
A continuous time formulation of this phenomenon was proposed in~\cite{EfKo19} by means of an optimal control problem, within which an optimal stopping time is learned.

An alternative approach to incorporate a regularizer into a proximal algorithm, known as plug-and-play prior~\cite{VeSi13} or regularization by denoising~\cite{RoEl17},
is the replacement of the proximal operator by an existing denoising algorithm such as BM3D~\cite{DaFo07}.
Combining this idea with deep learning was proposed in~\cite{MeMo17,RiCh17,ZhVa20}.
However, all aforementioned schemes lack a variational structure and are thus not interpretable in the framework of MAP inference.

The significance of stability for data-driven methods has recently been addressed in~\cite{AnRe20}, in which a systematical treatment of adversarial attacks for inverse problems was performed.
This issue has been studied in the context of classification by~\cite{SzZa14}, where adversarial samples have been introduced.
These samples are computed by perturbing input images as little as possible such that the attacked algorithm predicts wrong labels.
Incorporating adversarial samples in the training process of data driven methods increases their robustness as studied in \cite{SzZa14}.
In recent years, several methods were proposed for generating adversarial examples such as the fast gradient sign method~\cite{GoSh15}, Deepfool~\cite{MoSe16} or universal adversarial perturbations~\cite{MoSe17}.
In the context of inverse problems, adversarial examples are typically computed by maximizing the norm difference between the output of an algorithm and the associated ground truth in a local neighborhood around an input.
Thus, the robustness w.r.t.~adversarial attacks of an algorithm depends to a large extent on the local Lipschitz constant of the mapping defined by the algorithm.

This paper is an extended version of the prior conference proceeding~\cite{KoEf20}, in which a novel data-driven regularizer termed \emph{Total Deep Variation (TDV)} is introduced.
The TDV regularizer is inspired by recent design patterns of state-of-the-art deep convolutional neural networks and simultaneously ensures a variational structure.
We achieve this by representing the total energy of the regularizer by means of a residual multi-scale network (Figure~\ref{fig:network}) leveraging smooth activation functions.
Our proposed TDV regularizer can be used as a generic regularizer in variational formulations of linear inverse problems.
In analogy to~\cite{EfKo19,KoEf20}, we minimize an energy composed of a data fidelity term and the TDV regularizer using a gradient flow emanating from the corrupted input image on a finite time horizon~$[0,T]$ for a stopping time~$T$,
where the terminal state of the gradient flow defines the reconstructed image. 
Then, the stopping time and the parameters of the TDV regularizer are computed by solving a mean-field optimal control problem as advocated in~\cite{EHa19},
in which the cost functional is defined as the expectation of the loss function with respect to a training data distribution.
The state equation of the optimal control problem is a stochastic ordinary differential equation coinciding with the gradient flow of the energy, where the only source of randomness is the initial state.
We prove the existence of minimizers for this optimal control problem using the direct method in the calculus of variations.
A semi-implicit time discretization of the gradient flow results in a discretized optimal control problem in the mean-field setting, for which we also prove the existence of minimizers
as well as a first order necessary condition to automatize the computation of the optimal stopping time.
This training process is a form of \emph{discriminative learning} because we directly learn the functional form of the negative log-posterior~\cite{NgJo02}, in which
the TDV regularizer can be interpreted as a \emph{discriminative prior}.
In fact, the learned TDV regularizer adapts to the specific imaging task as we show in the eigenfunction analysis.
Moreover, the particular recursive structure of the discrete gradient flow allows the derivation of a stability analysis with respect to the initial states and the learned TDV parameters.
Both estimates depend on the local Lipschitz constant of the TDV regularizer, which is estimated in the mean-field setting.
Several numerical experiments demonstrate the applicability of the proposed method to numerous image restoration problems, in which we obtain state-of-the-art results.
In particular, we examine the robustness of this approach against perturbations and adversarial attacks, and an upper bound for the generalization error is empirically computed.

The major contributions of this paper are as follows:
\begin{itemize} \setlength\itemsep{0em}
\item
the design of a novel generic multi-scale variational regularizer learned from data,
\item
a rigorous mathematical analysis including a mean-field optimal control formulation of the learning problem,
\item
a stability analysis of the proposed method, which is validated by numerical experiments,
\item
a nonlinear eigenfunction analysis for the visualization and understanding of the learned regularizer,
\item
numerical evaluation of the robustness against adversarial attacks and empirical upper bounds for the generalization error,
\item
state-of-the-art results on a number of classical image restoration and medical image reconstruction problems with an impressively low number of learned parameters.
\end{itemize}
As already mentioned above, this paper extends the conference paper~\cite{KoEf20} in several aspects.
First, we extend the optimal control problem to a mean-field perspective including existence theorems.
Second, a rigorous stability analysis is pursued and numerical experiments regarding the robustness are performed.
Finally, we add several experiments involving color denoising, a multi-channel eigenfunction analysis, and numerical experiments addressing the robustness against adversarial attacks.

Let $x\in\R^n$.
We denote by $\Vert x\Vert_2\coloneqq\sqrt{\sum_{i=1}^n x_i^2}$ the $\ell^2$-norm of $x$ and by $\Vert x\Vert_\infty\coloneqq\max_{i=1,\ldots,n}\vert x_i\vert$ its $\ell^\infty$-norm.
Further, $A^\top\in\R^{n\times m}$ denotes the matrix transpose fo $A\in\R^{m\times n}$.
For two Banach spaces~$\mathcal{X}$ and~$\mathcal{Y}$, we denote by $C^0(\mathcal{X},\mathcal{Y})$ the space of continuous functions mapping from~$\mathcal{X}$ to~$\mathcal{Y}$,
and by $C^k(\mathcal{X},\mathcal{Y})$ for $k\geq 1$ the space of $k$-times continuously differentiable functions from~$\mathcal{X}$ to~$\mathcal{Y}$.
Finally, we denote by $D_s f$ the Jacobian of the function~$f$ w.r.t.~the $s^{th}$ argument.

\section{Mean-Field Optimal Control Problem}
In this section, we cast the training process as a mean-field optimal control problem taking into account the general approach presented in~\cite{EHa19}.

Let $(\mathcal{Y}\times\Xi,\mathcal{F},\Prob)$ be a complete probability space on~$\mathcal{Y}\times\Xi$ with $\sigma$-algebra~$\mathcal{F}$ and probability measure~$\Prob$.
We denote by $(y,\xi)$ a pair of independent random variables modeling the data representing the \emph{ground truth image}~$y\in\mathcal{Y}\subset\R^{nC}$ and \emph{additive noise}~$\xi\in\Xi\subset\R^l$ with associated distribution denoted by~$\mathcal{T}=\mathcal{T}_\mathcal{Y}\times\mathcal{T}_\Xi$.
Each ground truth image~$y$ represents an image of size $n=n_1\times n_2$ with $C$~channels and is related to the additive noise~$\xi$ by means of the \emph{observation}
\[
z=Ay+\xi,
\]
where $A\in\R^{l\times nC}$ is a fixed task-dependent linear operator of this linear inverse problem.
In particular, we assume that both $\mathcal{Y}$ and $\Xi$ are compact sets, which implies that all observations are contained in a compact set $\mathcal{Z}\subset\R^l$.
To estimate the unknown ground truth image~$y$ from the observation~$z$ we pursue a variational approach, which amounts to minimizing the energy functional
\begin{equation}
\mathrm{E}(x,z,\theta)\coloneqq\mathrm{D}(x,z)+\mathrm{R}(x,\theta)
\label{eq:energyFunctional}
\end{equation}
among all~$x\in\R^{nC}$.
In this paper, we consider the squared $\ell^2$-data fidelity term $\mathrm{D}(x,z)=\frac{1}{2}\Vert Ax-z\Vert_2^2$ and the total deep variation~\cite{KoEf20} regularizer~$\mathrm{R}$,
which is a convolutional neural network depending on learned training parameters~$\theta\in\Theta$, where $\Theta\subset\R^p$ is compact and convex
(the exact structure of the network is discussed below).

Let $\tilde{x}\in C^1([0,T],\R^{nC})$ be an \emph{image trajectory}, which evolves according to the \emph{gradient flow equation}~\cite{AmGi08} 
associated with~\eqref{eq:energyFunctional} on a finite time interval~$(0,T)$ given by
\begin{align}
\dot{\tilde{x}}(t)=&-D_1\mathrm{E}(\tilde{x}(t),z,\theta)=f(\tilde{x}(t),z,\theta)\notag\\
\coloneqq&-A^\top(A\tilde{x}(t)-z)-D_1\mathrm{R}(\tilde{x}(t),\theta)\label{eq:originalGradientFlow}
\end{align}
for $t\in(0,T)$ and $x(0)=x_0$.
Here, the observation-dependent initial value~$x_0$ is computed as~$x_0=A_\init z$ for a fixed task-dependent matrix $A_\init\in\R^{nC\times l}$, which could be, for instance, the pseudoinverse of $A$.
The proper choice of the stopping time $T\in[0,\Tmax]$ for a fixed~$\Tmax>0$ is essential for the quality of the reconstruction $\tilde{x}(T)$ of~$y$.
A more feasible, yet equivalent gradient flow is derived from the reparametrization $x(t)=\tilde{x}(tT)$, which yields for $t\in(0,1)$
\begin{equation}
\dot{x}(t)=Tf(x(t),z,\theta)
\label{eq:gradientFlow}
\end{equation}
with the same initial value as before.
We frequently write $x(t,y,\xi,T,\theta)$ to highlight the dependency of the image trajectory on the parameters $(y,\xi,T,\theta)\in\mathcal{Y}\times\Xi\times[0,\Tmax]\times\Theta$ for given~$t\in[0,1]$.
In particular, $x(1,y,\xi,T,\theta)$ is the computed output image.

In what follows, the training process is described as a \emph{mean-field optimal control problem}~\cite{EHa19} with control parameters~$\theta$ and~$T$.
To this end, let $\loss\in C^1(\R^{nC},\R_0^+)$ be a convex and coercive \emph{loss function}.
Then, we define the \emph{cost functional} as
\[
\mathrm{J}(T,\theta)\coloneqq\E_{(y,\xi)\sim\mathcal{T}}\left[\loss(x(1,y,\xi,T,\theta)-y)\right],
\]
which results in the mean-field optimal control problem
\begin{equation}
\inf_{T\in[0,\Tmax],\theta\in\Theta}\mathrm{J}(T,\theta).
\label{eq:objectiveFunction}
\end{equation}
\begin{remark}\label{rem:discreteOCP}
The mean-field optimal control formulation already encompasses the \emph{sampled optimal control problem}.
In detail, given a finite training set $(y^i,\xi^i)_{i=1}^N\sim\mathcal{T}^N$ drawn from the data distribution we can define the \emph{discrete probability measure} as
$\Prob(y,\xi)=\frac{1}{N}\sum_{i=1}^N\delta[y=y^i]\delta[\xi=\xi^i]$, where $\delta[s=t]=1$ if $s=t$ and $0$ otherwise.
This particular choice results in the sampled cost functional
\[
\mathrm{J}(T,\theta)=\frac{1}{N}\sum_{i=1}^N\loss(x(1,y^i,\xi^i,T,\theta)-y^i).
\]
\end{remark}

\begin{figure}
\centering
\includegraphics[width=.5\linewidth]{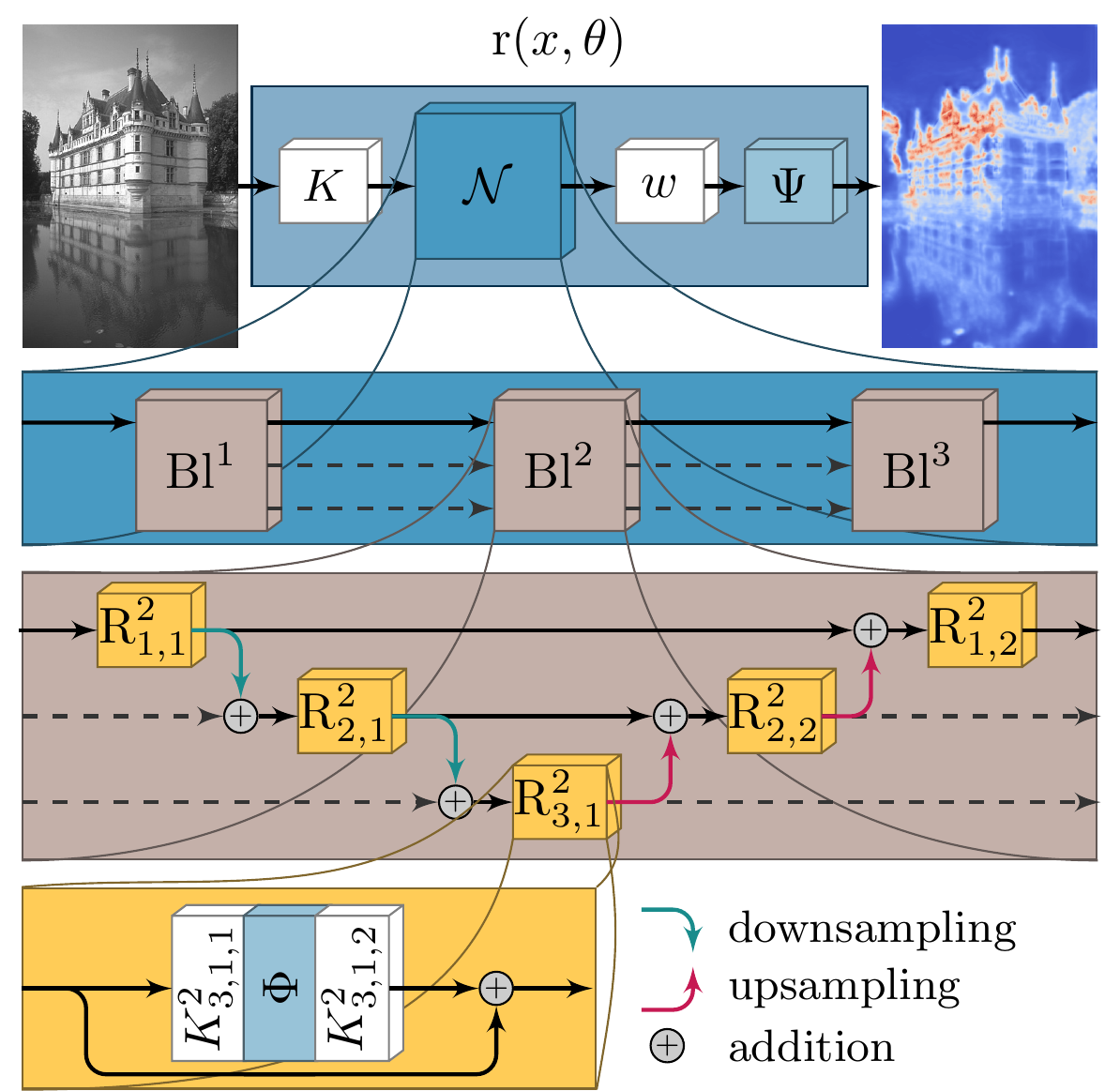}
\caption{The network structure of the total deep variation with $3$~blocks each operating on $3$~scales.}
\label{fig:network}
\end{figure}

In this paper, the regularizer~$\mathrm{R}:\R^{nC}\times\Theta\to\R$ is modeled as the total deep variation originally proposed in~\cite{KoEf20},
which is the sum of the pointwise deep variation~$\mathrm{r}:\R^{nC}\times\Theta\to\R^n$, i.e.~
\[
\mathrm{R}(x,\theta)=\sum_{i=1}^n \mathrm{r}(x,\theta)_i.
\]
The pointwise deep variation is defined as 
\[
\mathrm{r}(x,\theta)=\Psi(w\mathcal{N}(Kx)),
\]
where
\begin{itemize}[leftmargin=1.5em]
\item
$K\in\R^{nm\times nC}$ is the matrix representation of a learned $3\times3$ convolution kernel for $m=32$~feature channels with zero-mean constraint, i.e. 
$\sum_{i=1}^{nC}K_{j,i}=0$ for $j=1,\ldots,n$, which implies a spatial and radiometrical shift-invariance,
\item
$\mathcal{N}:\R^{nm}\to\R^{nm}$ is a multiscale convolutional neural network as illustrated in Figure~\ref{fig:network},
where we assume that $\Vert D\mathcal{N}\Vert_{C^0(\R^{nm})}\leq C_\mathcal{N}(\theta)$,
\item
$w\in\R^{n\times nm}$ is a learned $1\times1$ convolution kernel,
\item
$\Psi:\R^n\to\R^n,(x_1,\ldots,x_n)\mapsto(\psi(x_1),\ldots,\psi(x_n))$ using the \emph{potential function} $\psi\in C^2(\R,\R)$ satisfying $\Vert D\Psi\Vert_{C^0(\R^n)}\leq C_\Psi$ for a constant~$C_\Psi>0$.
\end{itemize}
We denote by~$\theta$ the entity of learned parameters, i.e.~$K$, $w$ and all convolution kernels of $\mathcal{N}$.
Following~\cite{KoEf20}, the total deep variation~TDV$^b_a$ for integers $a,b\geq 1$ consists of $b$~blocks $\textrm{Bl}^1,\ldots,\textrm{Bl}^b$ (gray blocks in Figure~\ref{fig:network}),
each of them has a U-Net~\cite{RoFi15} type architecture, where on all $a$~scales residual blocks $\mathrm{R}_{1,1}^i,\mathrm{R}_{1,2}^i,\ldots,\mathrm{R}_{a-1,1}^i, \mathrm{R}_{a-1,2}^i,\mathrm{R}_{a,1}^i$ (yellow blocks in Figure~\ref{fig:network}) are applied.
To increase the expressiveness of the network, residual connections are added between scales of consecutive blocks whenever possible.
Each residual block~$\mathrm{R}_{j,k}^i$ with $i\in\{1,\ldots,b\}$, $j\in\{1,\ldots,a\}$ and $k\in\{1,2\}$ exhibits the particular structure
$\mathrm{R}_{j,k}^i(x,\theta)=x+K_{j,k,2}^i\Phi(K_{j,k,1}^i x)$ for convolution operators~$K_{j,k,1}^i,K_{j,k,2}^i\in\R^{nm\times nm}$ of size $3\times 3$ with $m$~feature channels and no bias.
Following~\cite{HuMu99}, the log-student-t-distribution is a suitable model for the statistics of natural images, that is why
we choose the particular activation function $\Phi:\R^{nm}\to\R^{nm}$, $(x_1,\ldots,x_{nm})\mapsto(\phi(x_1),\ldots,\phi(x_{nm}))$ using the component-wise function $\phi(x)=\frac{1}{2}\log(1+x^2)$ with the properties $\phi'(0)=0$ and $\phi''(0)=1$.
Taking into account the work by Zhang~\cite{Zh19}, we use $3\times 3$ convolutions and transposed convolutions with stride~$2$ for downsampling and upsampling in conjunction with a blur kernel to avoid aliasing.

The existence of solutions to the mean-field optimal control problem is established in the next theorem.
\begin{theorem}\label{thm:existenceSolution}
The minimum in~\eqref{eq:objectiveFunction} is attained.
\end{theorem}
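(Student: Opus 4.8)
The plan is to apply the direct method in the calculus of variations over the compact control set $[0,\Tmax]\times\Theta$. Since this set is compact (being the product of a closed bounded interval and the compact set $\Theta\subset\R^p$), it suffices to show that the cost functional $\mathrm{J}$ is lower semicontinuous — in fact I expect to prove outright continuity, which is cleaner. So the goal reduces to: take a minimizing sequence $(T^j,\theta^j)\to(T^*,\theta^*)$ in $[0,\Tmax]\times\Theta$ (extracting a convergent subsequence by compactness) and show $\mathrm{J}(T^j,\theta^j)\to\mathrm{J}(T^*,\theta^*)$.

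The heart of the argument is continuous dependence of the terminal state $x(1,y,\xi,T,\theta)$ on the controls $(T,\theta)$, uniformly in the data $(y,\xi)$. First I would establish that for each fixed $(y,\xi,T,\theta)$ the reparametrized gradient flow \eqref{eq:gradientFlow} has a unique global solution on $[0,1]$: the right-hand side $Tf(x,z,\theta)$ is $C^1$ in $x$ (the data term is affine, and $D_1\mathrm{R}$ is $C^1$ because $\mathrm{R}$ is built from the $C^2$ potential $\psi$, the $C^\infty$ activation $\phi$, and linear convolutions), hence locally Lipschitz in $x$; combined with the a priori bound coming from the fact that $\mathrm{E}(\cdot,z,\theta)$ decreases along the flow together with coercivity of the data term (or, more directly, a Grönwall estimate using the global Lipschitz bounds $C_\mathcal{N}(\theta)$, $C_\Psi$ on the network components), solutions exist on all of $[0,1]$ and stay in a bounded set depending only on the compact sets $\mathcal{Z}$, $\Theta$ and on $\Tmax$. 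Then I would invoke the standard theorem on continuous (indeed $C^1$) dependence of ODE solutions on parameters: since $(x,T,\theta)\mapsto Tf(x,z,\theta)$ is jointly continuous and $C^1$ in $x$, and $z=z(y,\xi)$ depends continuously on the data, the map $(y,\xi,T,\theta)\mapsto x(1,y,\xi,T,\theta)$ is continuous on the compact set $\mathcal{Y}\times\Xi\times[0,\Tmax]\times\Theta$, hence uniformly continuous and bounded there.

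With that in hand the rest is routine: $\loss$ is continuous, so $(y,\xi,T,\theta)\mapsto\loss(x(1,y,\xi,T,\theta)-y)$ is continuous on a compact set, hence bounded by some constant $M$ and uniformly continuous. Therefore as $(T^j,\theta^j)\to(T^*,\theta^*)$ the integrand converges to $\loss(x(1,y,\xi,T^*,\theta^*)-y)$ pointwise in $(y,\xi)$, dominated by the constant $M\in L^1(\mathcal{T})$, and dominated convergence gives $\mathrm{J}(T^j,\theta^j)\to\mathrm{J}(T^*,\theta^*)$. (Alternatively one can quote uniform continuity to get the convergence without even invoking dominated convergence.) Thus $\mathrm{J}$ is continuous on the compact set $[0,\Tmax]\times\Theta$ and attains its minimum, proving Theorem~\ref{thm:existenceSolution}.

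The main obstacle I anticipate is not any single deep fact but assembling the uniform-in-data estimates cleanly: one must verify that the global Lipschitz constant of $f(\cdot,z,\theta)$ in $x$ can be bounded uniformly over $z\in\mathcal{Z}$ and $\theta\in\Theta$ (this uses compactness of $\Theta$ together with the assumed bounds $\Vert D\mathcal{N}\Vert_{C^0}\le C_\mathcal{N}(\theta)$, $\Vert D\Psi\Vert_{C^0}\le C_\Psi$ and continuity of $C_\mathcal{N}$ in $\theta$), and that the a priori bound on the trajectory is likewise uniform — otherwise the invocation of the ODE parameter-dependence theorem on a fixed compact region is not justified. Once the trajectories are confined to one fixed ball for all admissible data and controls, everything else follows from standard ODE theory and elementary real analysis.
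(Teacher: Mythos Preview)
Your proposal is correct and follows essentially the same route as the paper: compactness of $[0,\Tmax]\times\Theta$, global existence and uniform a priori bounds on the trajectories via the affine growth of $Tf$, then continuous dependence of $x(1,\cdot)$ on the controls to conclude continuity of $\mathrm{J}$. The only cosmetic difference is that the paper carries out the continuous-dependence step explicitly---writing down Lipschitz constants $L_x,L_\theta$ for $D_1\mathrm{R}$ and applying Gronwall to bound $\Vert x^*(t,y,\xi)-x^j(t,y,\xi)\Vert_2$ in terms of $\vert T^*-T^j\vert$ and $\Vert\theta^*-\theta^j\Vert_2$---whereas you invoke the standard ODE parameter-dependence theorem and then dominated convergence; both amount to the same argument.
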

\begin{proof}
The particular structure of the proposed total deep variation results in the estimate
\begin{align*}
&\Vert D_1\mathrm{R}(x,\theta)\Vert_2=\Vert K^\top D\mathcal{N}(Kx)w^\top D\Psi(w\mathcal{N}(Kx))\Vert_2\\
\leq&\Vert K\Vert_2\Vert D\mathcal{N}(Kx)\Vert_2\Vert w\Vert_2\Vert D\Psi(w\mathcal{N}(Kx))\Vert_2\\
\leq&\Vert K\Vert_2C_\mathcal{N}(\theta)\Vert w\Vert_2C_\Psi\eqqcolon C_\mathrm{R}(\theta)
\end{align*}
for all $x\in\R^{nC}$ and all $\theta\in\Theta$,
where we used that $\Vert D\Psi\Vert_{C^0(\R^n)}\leq C_\Psi$ and $\Vert D\mathcal{N}\Vert_{C^0(\R^{nm})}\leq C_\mathcal{N}(\theta)$.
In detail, the convolutional neural network is a complex concatenation of residual blocks, where the gradient of each of these blocks can be estimated as
\begin{equation}
\Vert D_1\mathrm{R}_{j,k}^i(x,\theta)\Vert_2=\Vert\Id+(K_{j,k,1}^i)^\top D\Phi(K_{j,k,1}^i x)K_{j,k,2}^i\Vert_2
\label{eq:residualBlockBound}
\end{equation}
for $x\in\R^{nm}$.
In particular, \eqref{eq:residualBlockBound} can be uniformly bounded independently of~$x$ due to $\sup_{x\in\R}\vert\phi'(x)\vert=\frac{1}{2}$.
Then, the right-hand side of the state equation can be bounded as follows:
\begin{equation}
\Vert T f(x,z,\theta)\Vert_2\leq T(\Vert A\Vert_2\Vert z\Vert_2+C_\mathrm{R}(\theta)+\Vert A\Vert_2^2\Vert x\Vert_2)
\label{eq:upperEstimateRHS}
\end{equation}
for $z\in\R^l$.
This affine growth already ensures that the maximum domain of existence of the state equation~\eqref{eq:gradientFlow} coincides with~$\R$ \cite[Theorem~2.17]{Te12}.
As a further result, we obtain that $x\in C^1([0,1],C^0(\mathcal{Y}\times\Xi\times[0,\Tmax]\times\Theta,\R^{nC}))$ due to the smoothness of the regularizer and
\[
x(t,y,\xi,T,\theta)\in\mathcal{X}
\]
for all $(t,y,\xi,T,\theta)\in[0,1]\times\mathcal{Y}\times\Xi\times[0,\Tmax]\times\Theta$ for a compact and convex set~$\mathcal{X}\subset\R^{nC}$.

Let $(T^j,\theta^j)\in[0,\Tmax]\times\Theta$ be a minimizing sequence for~$\mathrm{J}$ with an associated state~$x^j\coloneqq x(\cdot,\cdot,\cdot,T^j,\theta^j)\in C^1([0,1],C^0(\mathcal{Y}\times\Xi,\R^{nC}))$.
The compactness of $[0,\Tmax]\times\Theta$ implies that there exists a subsequence (not relabeled) such that $(T^j,\theta^j)\to(T^\ast,\theta^\ast)\in[0,\Tmax]\times\Theta$.
In what follows, we prove that~$x^j$ converges to $x^\ast\coloneqq x(\cdot,\cdot,\cdot,T^\ast,\theta^\ast)\in C^1([0,1],C^0(\mathcal{Y}\times\Xi,\R^{nC}))$ in $C^0([0,1]\times\mathcal{Y}\times\Xi)$.
We denote by $L_x$ and $L_\theta$ the Lipschitz constants of~$D_1\mathrm{R}$ w.r.t.~$x$ and~$\theta$, i.e.
\begin{align*}
\Vert D_1\mathrm{R}(x,\theta)-D_1\mathrm{R}(\widetilde{x},\theta)\Vert_2&\leq L_x\Vert x-\widetilde{x}\Vert_2,\\
\Vert D_1\mathrm{R}(x,\theta)-D_1\mathrm{R}(x,\widetilde{\theta})\Vert_2&\leq L_\theta\Vert \theta-\widetilde{\theta}\Vert_2
\end{align*}
for all $x,\widetilde{x}\in\mathcal{X}$ and all $\theta,\widetilde{\theta}\in\Theta$.
Then, we can estimate for any~$(y,\xi)\in\mathcal{Y}\times\Xi$ and $z=Ay+\xi$ as follows:
\begin{align*}
&\Vert T^\ast f(x^\ast(t,y,\xi),z,\theta^\ast)-T^j f(x^j(t,y,\xi),z,\theta^j)\Vert_2\\
\leq&\vert T^\ast-T^j\vert\Big(\Vert A\Vert_2^2\max_{x\in\mathcal{X}}\Vert x\Vert_2+\Vert A\Vert_2\max_{z\in\mathcal{Z}}\Vert z\Vert_2+\max_{(x,\theta)\in\mathcal{X}\times\Theta}\Vert D_1\mathrm{R}(x,\theta)\Vert_2\Big)\\
&+\Tmax L_\theta\Vert\theta^\ast-\theta^j\Vert_2+\Tmax(\Vert A\Vert_2^2+L_x)\Vert x^\ast(t,y,\xi)-x^j(t,y,\xi)\Vert_2\\
\eqqcolon&C_T\vert T^\ast-T^j\vert+C_\theta\Vert\theta^\ast-\theta^j\Vert_2+C_x\Vert x^\ast(t,y,\xi)-x^j(t,y,\xi)\Vert_2.
\end{align*}
Hence, since all state equations satisfy the initial condition $x^\ast(0,z)=x^j(0,z)=A_\init z$, we 
can apply Gronwall's inequality for initial value problems~\cite[Theorem~2.8]{Te12} to obtain
\[
\Vert x^\ast(t,y,\xi)-x^j(t,y,\xi)\Vert_2\leq\frac{C_T\vert T^\ast-T^j\vert+C_\theta\Vert\theta^\ast-\theta^j\Vert_2}{C_x}\left(e^{C_xt}-1\right).
\]
Thus, we can deduce the uniform convergence of $x^j$ to $x^\ast$ in $C^0([0,1]\times\mathcal{Y}\times\Xi)$ as $j\to\infty$, which implies
$\lim_{j\to\infty}\mathrm{J}(T^j,\theta^j)=\mathrm{J}(T^\ast,\theta^\ast)$.
\end{proof}

\section{Discretization of the Optimal Control Problem}
In this section, we propose a numerical time discretization scheme for the mean-field optimal control problem discussed in the previous section.
For an a priori fixed number of iteration steps~$S\in\N$, we propose a semi-implicit discretization of the state equation~\eqref{eq:gradientFlow}, which yields
\begin{equation}
x_{s+1}=x_s-\tfrac{T}{S}A^\top(Ax_{s+1}-z)-\tfrac{T}{S}D_1\mathrm{R}(x_s,\theta)\in\R^{nC}
\label{eq:stateEquationDiscrete}
\end{equation}
for $s=0,\ldots,S-1$ and initial state $x_0=A_\init z\in\R^{nC}$.
This equation is equivalent to $x_{s+1}=g(x_s,z,T,\theta)$ with
\[
g(x,z,T,\theta)\coloneqq(\Id+\tfrac{T}{S}A^\top A)^{-1}(x+\tfrac{T}{S}(A^\top z-D_1\mathrm{R}(x,\theta))).
\]
We denote by $\widehat{x}_s(y,\xi,T,\theta)$ the state of this discretization at time~$s$ given the ground truth~$y$, the additive noise~$\xi$, the stopping time~$T$ and the parameters~$\theta$.
Note that the smoothness of the regularizer and the compactness of~$\mathcal{Y}$, $\Xi$, $[0,\Tmax]$ and~$\Theta$ directly imply that
\[
\widehat{x}:\mathcal{Y}\times\Xi\times[0,\Tmax]\times\Theta\to\mathcal{X}^{S+1}\subset(\R^{nC})^{S+1}
\]
for a \emph{compact} and convex set $\mathcal{X}$.
Then, the \emph{discretized mean-field optimal control problem} is given by
\begin{equation}
\inf_{T\in[0,\Tmax],\theta\in\Theta}\mathrm{J}_S(T,\theta),
\label{eq:discreteOptimalControl}
\end{equation}
where the discrete cost functional reads as
\[
\mathrm{J}_S(T,\theta)\coloneqq\E_{(y,\xi)\sim\mathcal{T}}\left[\loss(\widehat{x}_S(y,\xi,T,\theta)-y)\right].
\]
\begin{theorem}\label{thm:existenceSolutionDiscrete}
The minimum in~\eqref{eq:discreteOptimalControl} is attained.
\end{theorem}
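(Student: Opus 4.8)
The plan is to mimic the direct-method argument of Theorem~\ref{thm:existenceSolution}, but the discrete recursion makes it considerably simpler since there is no differential equation and no Gronwall estimate. First I would take a minimizing sequence $(T^j,\theta^j)\in[0,\Tmax]\times\Theta$ for $\mathrm{J}_S$; by compactness of $[0,\Tmax]\times\Theta$ we may pass to a subsequence (not relabeled) with $(T^j,\theta^j)\to(T^\ast,\theta^\ast)\in[0,\Tmax]\times\Theta$. The goal is then to show $\widehat{x}_S(y,\xi,T^j,\theta^j)\to\widehat{x}_S(y,\xi,T^\ast,\theta^\ast)$ uniformly in $(y,\xi)\in\mathcal{Y}\times\Xi$, which, combined with continuity and the dominated convergence theorem (using that $\loss$ is continuous and everything lives in the compact set $\mathcal{X}$, so the integrand is uniformly bounded), gives $\mathrm{J}_S(T^j,\theta^j)\to\mathrm{J}_S(T^\ast,\theta^\ast)$, and hence $(T^\ast,\theta^\ast)$ is a minimizer.

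The core is therefore a continuity statement for the map $(T,\theta)\mapsto g(x,z,T,\theta)$, propagated through the $S$ iterations. I would first record that $g$ is Lipschitz in its first argument uniformly over $(z,T,\theta)\in\mathcal{Z}\times[0,\Tmax]\times\Theta$: from the explicit formula $g(x,z,T,\theta)=(\Id+\tfrac{T}{S}A^\top A)^{-1}(x+\tfrac{T}{S}(A^\top z-D_1\mathrm{R}(x,\theta)))$, the resolvent $(\Id+\tfrac{T}{S}A^\top A)^{-1}$ has operator norm at most $1$, and $D_1\mathrm{R}(\cdot,\theta)$ has Lipschitz constant $L_x$ (defined in the proof of Theorem~\ref{thm:existenceSolution}), so
\[
\Vert g(x,z,T,\theta)-g(\widetilde{x},z,T,\theta)\Vert_2\leq\bigl(1+\tfrac{\Tmax}{S}L_x\bigr)\Vert x-\widetilde{x}\Vert_2\eqqcolon\Lambda\Vert x-\widetilde{x}\Vert_2.
\]
Next I would establish joint continuity of $g$ in $(T,\theta)$ at fixed $x\in\mathcal{X}$, $z\in\mathcal{Z}$: the resolvent depends continuously (indeed Lipschitz, on the compact $T$-range) on $T$, the prefactor $\tfrac{T}{S}$ is continuous in $T$, and $D_1\mathrm{R}(x,\cdot)$ is $L_\theta$-Lipschitz in $\theta$; combining these with boundedness of $x$, $z$, $A^\top z$, and $D_1\mathrm{R}(x,\theta)$ over the compact sets yields an estimate of the form
\[
\Vert g(x,z,T,\theta)-g(x,z,\widetilde{T},\widetilde{\theta})\Vert_2\leq C_1\vert T-\widetilde{T}\vert+C_2\Vert\theta-\widetilde{\theta}\Vert_2
\]
with constants $C_1,C_2$ depending only on the compact sets (and on $\Tmax,S,\Vert A\Vert_2,L_\theta$, and $\max_{\mathcal{X}\times\Theta}\Vert D_1\mathrm{R}\Vert_2$), uniformly in $(x,z)\in\mathcal{X}\times\mathcal{Z}$.

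With these two ingredients the induction on $s$ is routine: writing $x_s^j\coloneqq\widehat{x}_s(y,\xi,T^j,\theta^j)$ and $x_s^\ast$ analogously, we have $x_0^j=x_0^\ast=A_\init z$, and
\[
\Vert x_{s+1}^j-x_{s+1}^\ast\Vert_2\leq\Lambda\Vert x_s^j-x_s^\ast\Vert_2+C_1\vert T^j-T^\ast\vert+C_2\Vert\theta^j-\theta^\ast\Vert_2,
\]
so by the discrete Gronwall (i.e.\ summing the geometric series) $\Vert x_s^j-x_s^\ast\Vert_2\leq\frac{\Lambda^s-1}{\Lambda-1}\bigl(C_1\vert T^j-T^\ast\vert+C_2\Vert\theta^j-\theta^\ast\Vert_2\bigr)$ for all $s\leq S$, uniformly in $(y,\xi)$. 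Taking $s=S$ and letting $j\to\infty$ gives the desired uniform convergence, and passing to the limit in $\mathrm{J}_S$ finishes the proof. I do not expect a genuine obstacle here; the only mild care needed is to state the Lipschitz-in-$\theta$ bound for $D_1\mathrm{R}$ on the relevant compact set (already available from the proof of Theorem~\ref{thm:existenceSolution}) and to note that the resolvent $(\Id+\tfrac{T}{S}A^\top A)^{-1}$ is well defined for all $T\geq0$ because $A^\top A$ is positive semidefinite, so no invertibility issue arises.
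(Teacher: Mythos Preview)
Your proposal is correct and follows essentially the same approach as the paper: extract a convergent subsequence by compactness of $[0,\Tmax]\times\Theta$, derive a one-step recursion $\Vert\widehat{x}^\ast_{s+1}-\widehat{x}^j_{s+1}\Vert_2\leq C_x\Vert\widehat{x}^\ast_s-\widehat{x}^j_s\Vert_2+C_T\vert T^\ast-T^j\vert+C_\theta\Vert\theta^\ast-\theta^j\Vert_2$ from the Lipschitz properties of $g$, and iterate via the geometric series to get uniform convergence of the terminal states. Your version is slightly more explicit (operator-norm bound on the resolvent, well-posedness from positive semidefiniteness of $A^\top A$, dominated convergence for the expectation), but the argument is the same.
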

\begin{proof}
Let $(T^j,\theta^j)\in[0,\Tmax]\times\Theta$ be a minimizing sequence for~$\mathrm{J}_S$ with an associated state~$\widehat{x}^j\coloneqq\widehat{x}(\cdot,\cdot,T^j,\theta^j)$.
As in the time continuous case, the compactness of $[0,\Tmax]\times\Theta$ implies the existence of a subsequence (again not relabeled) such that $(T^j,\theta^j)\to(T^\ast,\theta^\ast)\in[0,\Tmax]\times\Theta$, where
the associated state is given by $\widehat{x}^\ast\coloneqq\widehat{x}(\cdot,\cdot,T^\ast,\theta^\ast)$.
Then, we can estimate for any~$(y,\xi)\in\mathcal{Y}\times\Xi$ and $s=0,\ldots,S-1$ as follows:
\[
\Vert\widehat{x}^\ast_{s+1}(y,\xi)-\widehat{x}^j_{s+1}(y,\xi)\Vert_2
\leq C_T\vert T^\ast-T^j\vert+C_\theta\Vert\theta^\ast-\theta^j\Vert_2+C_x\Vert\widehat{x}_s^\ast(y,\xi)-\widehat{x}_s^j(y,\xi)\Vert_2.
\]
Note that the constants~$C_T$, $C_\theta$ and $C_x$ depend on $A$, $S$, $L_x$, $L_\theta$, $\Tmax$, $\Theta$ and $\mathcal{Z}$.
An induction argument reveals
\[
\Vert\widehat{x}_{s+1}^\ast(y,\xi)-\widehat{x}_{s+1}^j(y,\xi)\Vert_2\leq(C_T\vert T^\ast-T^j\vert+C_\theta\Vert\theta^\ast-\theta^j\Vert_2)\tfrac{1-C_x^{s+1}}{1-C_x}.
\]
In particular, $\Vert x_S^\ast-x_S^j\Vert_{C^0(\mathcal{Y}\times\Xi)}\to 0$ as $j\to\infty$, which implies $\lim_{j\to\infty}\mathrm{J}_S(T^j,\theta^j)=\mathrm{J}_S(T^\ast,\theta^\ast)$.
\end{proof}
The existence of the discrete adjoint state is discussed in the subsequent theorem:
\begin{theorem}\label{thm:adjoint}
Let $(T^\ast,\theta^\ast)\in[0,\Tmax]\times\Theta$ be a pair of control parameters for $\mathrm{J}_S$ and $x^\ast\in\mathcal{L}\coloneqq L^2(\mathcal{Y}\times\Xi,(\R^{nC})^{S+1})$ the corresponding state.
Then there exists a discrete adjoint state $p^\ast\in\mathcal{L}$ given by
\begin{equation}
p^\ast_s(y,\xi)=(\Id-\tfrac{T^\ast}{S}D_1^2\mathrm{R}(x^\ast_s(y,\xi),\theta^\ast))(\Id+\tfrac{T^\ast}{S}A^\top A)^{-1}p^\ast_{s+1}(y,\xi)\label{eq:adjointEquation}
\end{equation}
for $s=S-1,\ldots,0$ with terminal condition $p^\ast_S(y,\xi)=-D\loss(x^\ast_S(y,\xi)-y)$.
\end{theorem}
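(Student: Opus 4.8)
The plan is to define the adjoint state by backward recursion directly from the formula~\eqref{eq:adjointEquation} and then verify that it is well-defined and lies in~$\mathcal{L}$. First I would observe that the map~$\widehat{x}$ takes values in~$\mathcal{X}^{S+1}$ for the compact, convex set~$\mathcal{X}$ from the discretization section, so for fixed $(T^\ast,\theta^\ast)$ the state trajectory $s\mapsto x^\ast_s(y,\xi)$ is pointwise in~$\mathcal{X}$. Since $\mathrm{R}$ is $C^2$ (it is a composition of the $C^2$ potential $\psi$, the smooth activation $\phi(x)=\tfrac12\log(1+x^2)$, and linear maps $K$, $w$, and the convolutions in~$\mathcal{N}$), the Hessian $D_1^2\mathrm{R}(\cdot,\theta^\ast)$ is continuous, hence bounded on the compact set~$\mathcal{X}$, say by a constant~$C_H$ depending on~$\theta^\ast$. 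The matrix $(\Id+\tfrac{T^\ast}{S}A^\top A)^{-1}$ has operator norm at most~$1$ because $A^\top A$ is positive semidefinite and $T^\ast/S\geq 0$. Therefore the linear map appearing on the right-hand side of~\eqref{eq:adjointEquation} has operator norm bounded by $1+\tfrac{\Tmax}{S}C_H\eqqcolon M$, uniformly in $(y,\xi)$.

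Next I would carry out the backward induction: set $p^\ast_S(y,\xi)\coloneqq-D\loss(x^\ast_S(y,\xi)-y)$, which is measurable in $(y,\xi)$ because $\loss\in C^1$ and $x^\ast_S$ is measurable (indeed $x^\ast\in\mathcal{L}$ by assumption); moreover $x^\ast_S(y,\xi)-y$ ranges over a compact set, so $D\loss$ of it is bounded, giving $p^\ast_S\in L^2(\mathcal{Y}\times\Xi,\R^{nC})$. Then for $s=S-1,\dots,0$ define $p^\ast_s(y,\xi)$ by~\eqref{eq:adjointEquation}. Measurability is preserved at each step because the defining expression is a composition of the measurable maps $(y,\xi)\mapsto x^\ast_s(y,\xi)$ and $(y,\xi)\mapsto p^\ast_{s+1}(y,\xi)$ with the continuous maps $x\mapsto D_1^2\mathrm{R}(x,\theta^\ast)$ and matrix multiplication. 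The norm bound $\Vert p^\ast_s(y,\xi)\Vert_2\leq M\Vert p^\ast_{s+1}(y,\xi)\Vert_2$ propagates to $\Vert p^\ast_s(y,\xi)\Vert_2\leq M^{S-s}\Vert p^\ast_S(y,\xi)\Vert_2$, so each $p^\ast_s$ is in $L^2$ and hence $p^\ast=(p^\ast_0,\dots,p^\ast_S)\in\mathcal{L}$. This establishes existence.

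The only genuine subtlety, and the step I would be most careful about, is making sure all the constants ($C_H$, and the $C^2$-bound on~$\mathrm{R}$) are actually finite, which hinges on the state staying in the fixed compact set~$\mathcal{X}$; this is exactly what was recorded right after~\eqref{eq:stateEquationDiscrete}, so I would cite that. A secondary point worth a sentence is that the adjoint equation~\eqref{eq:adjointEquation} is precisely the transpose of the linearization of the forward map $x_s\mapsto x_{s+1}=g(x_s,z,T,\theta)$: differentiating $g$ gives $D_1 g(x,z,T,\theta)=(\Id+\tfrac{T}{S}A^\top A)^{-1}(\Id-\tfrac{T}{S}D_1^2\mathrm{R}(x,\theta))$, and $p^\ast_s=(D_1 g(x^\ast_s,z,T^\ast,\theta^\ast))^\top p^\ast_{s+1}$, so~\eqref{eq:adjointEquation} is the discrete costate equation obtained by differentiating the Lagrangian of~\eqref{eq:discreteOptimalControl}; I would include this remark to justify why this particular recursion is the right definition, though strictly speaking the theorem as stated only asks for existence of an element of~$\mathcal{L}$ satisfying~\eqref{eq:adjointEquation}, which the construction above delivers.
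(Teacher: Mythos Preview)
Your proof is correct but proceeds along a genuinely different route from the paper. You construct $p^\ast$ directly by backward recursion and verify $\mathcal{L}$-membership via compactness of~$\mathcal{X}$ and the $C^2$-bound on~$\mathrm{R}$; this is elementary and fully sufficient for the theorem as literally stated. The paper instead sets up the constraint map $G:\mathcal{L}\times[0,\Tmax]\times\Theta\to\mathcal{L}$ encoding the discrete state equations, forms the Lagrangian~$\mathrm{L}$, and invokes an abstract Lagrange multiplier existence result (Zeidler, Theorem~43.D), for which the key step is proving surjectivity of $D_1G(x^\ast,T^\ast,\theta^\ast)$ by explicitly solving the lower-triangular linearized forward system. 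The adjoint recursion~\eqref{eq:adjointEquation} then drops out as the stationarity condition $D_x\mathrm{L}=0$. What the paper's approach buys is that $p^\ast$ is established as a genuine Lagrange multiplier for the constrained problem, which is exactly the framework exploited in the next theorem (the optimality condition for~$T^\ast$ is read off from $D_T\mathrm{L}=0$). Your closing remark that $p^\ast_s=(D_1g)^\top p^\ast_{s+1}$ is the costate equation of the Lagrangian captures this connection, but you do not verify the constraint qualification; if you want your argument to feed cleanly into the subsequent first-order condition, you would either need to add the surjectivity check or rederive that condition by direct differentiation of~$\mathrm{J}_S$ through the recursion rather than via the Lagrangian.
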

\begin{proof}
First, we define the functional $G:\mathcal{L}\times[0,\Tmax]\times\Theta\to\mathcal{L}$ representing the constraints as follows:
\begin{align*}
G(x,T,\theta)(y,\xi)=
\begin{pmatrix}
x_0(y,\xi)-A_\init(Ay+\xi)\\
x_1(y,\xi)-g(x_0(y,\xi),Ay+\xi,T,\theta)\\
\vdots\\
x_S(y,\xi)-g(x_{S-1}(y,\xi),Ay+\xi,T,\theta)
\end{pmatrix}.
\end{align*}
Then, the Lagrange functional~$\mathrm{L}:\mathcal{L}\times[0,\Tmax]\times\Theta\times\mathcal{L}\to\R$ using $\mathcal{L}^\ast\cong\mathcal{L}$ is given by
\[
\mathrm{L}(x,T,\theta,p)\coloneqq\E_{(y,\xi)\sim\mathcal{T}}\Bigg[\loss(x_{S}(y,\xi)-y)+\sum_{s=0}^{S}\left\langle p_s(y,\xi),G_s(x(y,\xi),T,\theta)\right\rangle\Bigg].
\]
Following~\cite[Theorem~43.D]{Ze85}, the Lagrange multiplier $p^\ast\in\mathcal{L}$ associated with $(x^\ast,T^\ast,\theta^\ast)$ exists if $\loss$ and $G$ are (continuously) Frech\'et differentiable
and $D_1 G(x^\ast,T^\ast,\theta^\ast)$ is surjective.
The differentiability requirements are immediately implied by the smoothness requirements of $\mathrm{R}$.
To prove the surjectivity of~$D_1 G$, we first compute for $x\in\mathcal{L}$
\[
D_1 G(x^\ast,T^\ast,\theta^\ast)(x)(y,\xi)
=
\begin{pmatrix}
x_0(y,\xi)\\
x_1(y,\xi)-D_1g(x^\ast_0(y,\xi),Ay+\xi,T^\ast,\theta^\ast)x_0(y,\xi)\\
\vdots\\
x_S(y,\xi)-D_1g(x^\ast_{S-1}(y,\xi),Ay+\xi,T^\ast,\theta^\ast)x_{S-1}(y,\xi)
\end{pmatrix}.
\]
Thus, for any $w\in\mathcal{L}$ the solution~$x\in\mathcal{L}$ of the equation $D_1 G(x^\ast,T^\ast,\theta^\ast)(x)=w$ is given by 
\begin{align*}
x_0(y,\xi)&=w_0(y,\xi),\\
x_s(y,\xi)&=w_s(y,\xi)+D_1g(x^\ast_{s-1}(y,\xi),Ay+\xi,T^\ast,\theta^\ast)x_{s-1}(y,\xi)
\end{align*}
for $s=1,\ldots,S$, which proves the surjectivity and thus the existence of Lagrange multipliers.
Finally, \eqref{eq:adjointEquation} is implied by the optimality of~$\mathrm{L}$ w.r.t.~$x$.
\end{proof}
Next, we derive an optimality condition for the stopping time, which can easily be evaluated numerically.
\begin{theorem}\label{thm:firstOrderConditionDiscrete}
Let $(T^\ast,\theta^\ast)$ be a stationary point of~$\mathrm{J}_S$ with associated state~$x^\ast$ and adjoint state $p^\ast$ as in Theorem~\ref{thm:adjoint}.
Then,
\begin{equation}
\E_{(y,\xi)\sim\mathcal{T}}\bigg[\sum_{s=0}^{S-1}\Big\langle p^\ast_{s+1}(y,\xi),(\Id+\tfrac{T^\ast}{S}A^\ast A)^{-1}(x^\ast_{s+1}(y,\xi)-x^\ast_s(y,\xi))\Big\rangle\bigg]=0.
\label{eq:optimalTDiscrete}
\end{equation}
\end{theorem}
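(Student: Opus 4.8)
The plan is to read off~\eqref{eq:optimalTDiscrete} from the stationarity of the Lagrangian~$\mathrm{L}$ introduced in the proof of Theorem~\ref{thm:adjoint} with respect to the stopping time. Since $(T^\ast,\theta^\ast)$ is a stationary point of~$\mathrm{J}_S$ on $[0,\Tmax]\times\Theta$, the Lagrange multiplier rule~\cite[Theorem~43.D]{Ze85} — whose hypotheses (Fr\'echet differentiability of~$\loss$ and~$G$, surjectivity of $D_1 G$) were already verified in the proof of Theorem~\ref{thm:adjoint} — furnishes a multiplier at which $\mathrm{L}$ is stationary in all its arguments; by the invertibility of $D_1 G$ this multiplier is unique and hence coincides with the adjoint state $p^\ast$ of Theorem~\ref{thm:adjoint}. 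Besides the adjoint equation~\eqref{eq:adjointEquation} (which is the stationarity of~$\mathrm{L}$ in~$x$), this yields $\partial_T\mathrm{L}(x^\ast,T^\ast,\theta^\ast,p^\ast)=0$, so the whole proof reduces to computing $\partial_T\mathrm{L}$ and rewriting it along the discrete trajectory.

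First I would use that in~$\mathrm{L}$ only the maps $T\mapsto g(x_s,\cdot,T,\theta)$ carry a $T$-dependence, so that
\[
\partial_T\mathrm{L}(x,T,\theta,p)=-\E_{(y,\xi)\sim\mathcal{T}}\Big[\sum_{s=0}^{S-1}\big\langle p_{s+1}(y,\xi),\partial_T g(x_s(y,\xi),Ay+\xi,T,\theta)\big\rangle\Big].
\]
Next I would differentiate $g(x,z,T,\theta)=(\Id+\tfrac{T}{S}A^\top A)^{-1}(x+\tfrac{T}{S}(A^\top z-D_1\mathrm{R}(x,\theta)))$ in~$T$, using $\partial_T(\Id+\tfrac{T}{S}A^\top A)^{-1}=-\tfrac{1}{S}(\Id+\tfrac{T}{S}A^\top A)^{-1}A^\top A(\Id+\tfrac{T}{S}A^\top A)^{-1}$; after collecting terms this gives
\[
\partial_T g(x,z,T,\theta)=\tfrac{1}{S}(\Id+\tfrac{T}{S}A^\top A)^{-1}\big(A^\top z-D_1\mathrm{R}(x,\theta)-A^\top A\,g(x,z,T,\theta)\big).
\]

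The key step is to evaluate this along~$x^\ast$. Since $g(x^\ast_s(y,\xi),Ay+\xi,T^\ast,\theta^\ast)=x^\ast_{s+1}(y,\xi)$, the semi-implicit state equation~\eqref{eq:stateEquationDiscrete} rearranges to $x^\ast_{s+1}-x^\ast_s=\tfrac{T^\ast}{S}\big(A^\top(Ay+\xi)-D_1\mathrm{R}(x^\ast_s,\theta^\ast)-A^\top A\,x^\ast_{s+1}\big)$, hence
\[
(\Id+\tfrac{T^\ast}{S}A^\top A)^{-1}\big(x^\ast_{s+1}(y,\xi)-x^\ast_s(y,\xi)\big)=T^\ast\,\partial_T g\big(x^\ast_s(y,\xi),Ay+\xi,T^\ast,\theta^\ast\big).
\]
Plugging this into the formula for~$\partial_T\mathrm{L}$ shows that the left-hand side of~\eqref{eq:optimalTDiscrete} equals $-T^\ast\,\partial_T\mathrm{L}(x^\ast,T^\ast,\theta^\ast,p^\ast)$, which vanishes by the stationarity discussed above; moreover the prefactor~$T^\ast$ makes the identity trivially true when $T^\ast=0$ (in which case~\eqref{eq:stateEquationDiscrete} already forces $x^\ast_{s+1}=x^\ast_s$), so no separate treatment of the boundary of $[0,\Tmax]$ is needed.

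I expect the main obstacle to lie not in the algebra but in the soft parts of the set-up: making precise that the Lagrange multiplier rule genuinely delivers stationarity of~$\mathrm{L}$ in the $T$-slot in the $L^2$-setting~$\mathcal{L}$ and that this multiplier is the adjoint state of Theorem~\ref{thm:adjoint}, and justifying the interchange of the $T$-derivative with the expectation in $\partial_T\mathrm{L}$. The latter follows from the $C^1$-smoothness of~$g$ together with the uniform bounds on its derivatives over the compact set $\mathcal{X}\times\mathcal{Z}\times[0,\Tmax]\times\Theta$ (the same compactness exploited in Theorems~\ref{thm:existenceSolutionDiscrete} and~\ref{thm:adjoint}) via dominated convergence; once this is in place, the matrix-inverse differentiation and the substitution of~\eqref{eq:stateEquationDiscrete} are entirely routine.
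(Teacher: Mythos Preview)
Your proposal is correct and follows essentially the same route as the paper: differentiate $g$ in~$T$ via the matrix-inverse formula, substitute the semi-implicit state equation~\eqref{eq:stateEquationDiscrete} to rewrite $\partial_T g(x^\ast_s,\cdot,T^\ast,\theta^\ast)$ as $\tfrac{1}{T^\ast}(\Id+\tfrac{T^\ast}{S}A^\top A)^{-1}(x^\ast_{s+1}-x^\ast_s)$, and conclude from stationarity of~$\mathrm{L}$ in~$T$. Your treatment is in fact slightly more careful than the paper's, which introduces the factor $\tfrac{1}{T}$ earlier in the simplification of $\partial_T g$ without commenting on the boundary case $T^\ast=0$ that you handle explicitly.
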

\begin{proof}
Let us define $B(T)\coloneqq\Id+\frac{T}{S}A^\ast A$ and observe that
\[
\tfrac{\dx}{\dx T}(B(T)^{-1})=-B(T)^{-1}\left(\tfrac{\dx}{\dx T}B(T)\right)B(T)^{-1}.
\]
The derivative of~$g$ w.r.t.~$T$ reads as
\begin{align*}
\tfrac{\dx}{\dx T}g(x,z,T,\theta)
=&-B(T)^{-1}\Big(\tfrac{1}{S}A^\top AB(T)^{-1}(x+\tfrac{T}{S}(A^\top z-D_1\mathrm{R}(x,\theta)))-\tfrac{1}{S}(A^\top z-D_1\mathrm{R}(x,\theta))\Big)\\
=&-\tfrac{1}{T}B(T)^{-1}\Big(x-B(T)^{-1}(x+\tfrac{T}{S}(A^\top z-D_1\mathrm{R}(x,\theta)))\Big).
\end{align*}
Due to~\eqref{eq:stateEquationDiscrete} the following relation holds true for the optimal $x^\ast\in\mathcal{L}$ and $s=0,\ldots,S-1$:
\[
B(T)x_{s+1}^\ast=x_s^\ast+\tfrac{T}{S}(A^\top z-D_1\mathrm{R}(x_s^\ast,\theta)).
\]
Hence, the optimality condition of~$L$ w.r.t.~$T^\ast$ reads as
\[
\E_{(y,\xi)\sim\mathcal{T}}\bigg[-\frac{1}{T^\ast}\sum_{s=0}^{S-1}\Big\langle p^\ast_{s+1}(y,\xi),(\Id+\tfrac{T^\ast}{S}A^\ast A)^{-1}(x^\ast_{s+1}(y,\xi)-x^\ast_s(y,\xi))\Big\rangle\bigg]=0,
\]
which proves this theorem.
\end{proof}

\section{Stability Analysis}
Here, we examine the stability of the proposed method, which quantifies the changes in the output caused by local perturbations of the observations and training parameters, respectively.
The central assumption in both cases is that the distribution of the test data coincides with the distribution of the training data in the mean-field setting.
Numerical results for the stability analysis are presented in subsection~\ref{sub:stabilityExperiments}.

\subsection{Stability Analysis w.r.t.~Input}
In what follows, we perform a stability analysis for the proposed algorithm, in which we derive upper bounds along the trajectories for different noise instances in the mean-field context.
To this end, we first compute quantiles of the Lipschitz constant of the explicit update for the proposed discretization scheme given the data distribution~$\mathcal{T}$.
Then, upper bounds for the difference of trajectories associated with one ground truth image and different noise instances drawn from the data distribution are derived using a recursion argument.

Let $x,\widetilde{x}\in\R^{nC}$, $T\in[0,\Tmax]$, and $\theta\in\Theta$.
We define the local Lipschitz constant of the explicit update step~$x\mapsto x-\frac{T}{S}D_1\mathrm{R}(x,\theta)$ as
\begin{equation*}
L_x(x,\widetilde{x},T,\theta)
\coloneqq\frac{\Vert x-\frac{T}{S}D_1\mathrm{R}(x,\theta)-\widetilde{x}+\frac{T}{S}D_1\mathrm{R}(\widetilde{x},\theta)\Vert_2}{\Vert x-\widetilde{x}\Vert_2},
\end{equation*}
where we set $L_x=0$ if the denominator vanishes.
Then, the cumulative distribution function~$F_S$ of the local Lipschitz constant on the data distribution~$\mathcal{T}$ for $L\in\R$ is defined as
\[
F_S(L)=\Prob\Big(\max_{s=0,\ldots,S}L_x(\widehat{x}_s(y,\xi,T,\theta),\widehat{x}_s(y,\widetilde{\xi},T,\theta),T,\theta)\leq L:y\sim\mathcal{T}_\mathcal{Y},\,\xi,\widetilde{\xi}\sim\mathcal{T}_\Xi\Big).
\]
Thus, the maximum local Lipschitz constant of the explicit update step along each trajectory is bounded by $F_S^{-1}(1-\delta)$ with probability~$1-\delta$.
\begin{theorem}[Stability w.r.t.~input]\label{thm:stability}
Let $(T,\theta)\in[0,\Tmax]\times\Theta$ be fixed control parameters, $y\sim\mathcal{T}_\mathcal{Y}$ and $\xi,\widetilde{\xi}\sim\mathcal{T}_\Xi$.
We denote by~$x,\widetilde x\in(\R^{nC})^{S+1}$ two solutions of the state equation associated with $z=Ay+\xi$ and $\widetilde{z}=Ay+\widetilde{\xi}$, and corresponding~$x_0=A_\init z$ and $\widetilde{x}_0=A_\init\widetilde{z}$, respectively.
The discrete state equations are given by
\[
x_{s+1}=g(x_s,z,T,\theta),\quad
\widetilde{x}_{s+1}=g(\widetilde{x}_s,\widetilde{z},T,\theta)
\]
for $s=0,\ldots,S-1$.
Let $\delta\in[0,1)$,
\[
\alpha_1(\delta)\coloneqq\Vert B^{-1}\Vert_2 F_S^{-1}(1-\delta),\quad\beta_1\coloneqq\tfrac{T}{S}\Vert B^{-1}\Vert_2\Vert A\Vert_2
\]
for $B\coloneqq\Id+\tfrac{T}{S}A^\top A$.
Then,
\[
\tfrac{1}{nC}\Vert x_{s+1}-\widetilde{x}_{s+1}\Vert_2\leq\tfrac{1}{nC}\left(\alpha_1(\delta)^{s+1}\Vert A_\init\Vert_2+\tfrac{1-\alpha_1(\delta)^{s+1}}{1-\alpha_1(\delta)}\beta_1\right)\Vert z-\widetilde{z}\Vert_2
\]
holds true with probability~$1-\delta$.
\end{theorem}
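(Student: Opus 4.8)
The plan is to turn the two semi-implicit recursions into a single affine scalar recursion for $\Vert x_s-\widetilde x_s\Vert_2$ and then unroll it, with all the randomness absorbed into one event of probability $1-\delta$. \emph{Step 1 (one-step estimate).} Starting from $x_{s+1}=g(x_s,z,T,\theta)=B^{-1}\big(x_s+\tfrac{T}{S}(A^\top z-D_1\mathrm{R}(x_s,\theta))\big)$ and the analogous identity for $\widetilde x_{s+1}$, I would subtract the two and regroup the right-hand side into an explicit-update part and a data part:
\[
x_{s+1}-\widetilde x_{s+1}=B^{-1}\Big[\big(x_s-\tfrac{T}{S}D_1\mathrm{R}(x_s,\theta)\big)-\big(\widetilde x_s-\tfrac{T}{S}D_1\mathrm{R}(\widetilde x_s,\theta)\big)\Big]+\tfrac{T}{S}B^{-1}A^\top(z-\widetilde z).
\]
Taking $\Vert\cdot\Vert_2$, using submultiplicativity of the spectral norm, $\Vert A^\top\Vert_2=\Vert A\Vert_2$, and the definition of $\beta_1$, the data part is bounded by $\beta_1\Vert z-\widetilde z\Vert_2$. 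For the explicit-update part, the bracketed vector has $\Vert\cdot\Vert_2$ equal to $L_x(x_s,\widetilde x_s,T,\theta)\,\Vert x_s-\widetilde x_s\Vert_2$ by the very definition of the local Lipschitz constant (and the bound is trivially $0$ if $x_s=\widetilde x_s$), so it is bounded by $\Vert B^{-1}\Vert_2\,L_x(x_s,\widetilde x_s,T,\theta)\,\Vert x_s-\widetilde x_s\Vert_2$.

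\emph{Step 2 (probabilistic reduction).} By construction of $F_S$ and of its quantile, the single event
\[
\mathcal A_\delta\coloneqq\Big\{\max_{s=0,\dots,S}L_x\big(\widehat x_s(y,\xi,T,\theta),\widehat x_s(y,\widetilde\xi,T,\theta),T,\theta\big)\le F_S^{-1}(1-\delta)\Big\}
\]
has probability at least $1-\delta$. On $\mathcal A_\delta$ the one-step estimate of Step 1 becomes the deterministic affine recursion $\Vert x_{s+1}-\widetilde x_{s+1}\Vert_2\le\alpha_1(\delta)\Vert x_s-\widetilde x_s\Vert_2+\beta_1\Vert z-\widetilde z\Vert_2$ for $s=0,\dots,S-1$, with $\alpha_1(\delta)=\Vert B^{-1}\Vert_2 F_S^{-1}(1-\delta)$, since the solution of such a recursion is nondecreasing in its multiplicative coefficient, so replacing the pathwise $L_x$-values by the quantile only enlarges the bound.

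\emph{Step 3 (unrolling).} With the initialization $\Vert x_0-\widetilde x_0\Vert_2=\Vert A_\init(z-\widetilde z)\Vert_2\le\Vert A_\init\Vert_2\Vert z-\widetilde z\Vert_2$, a straightforward induction on $s$ gives
\[
\Vert x_{s+1}-\widetilde x_{s+1}\Vert_2\le\Big(\alpha_1(\delta)^{s+1}\Vert A_\init\Vert_2+\beta_1\sum_{k=0}^{s}\alpha_1(\delta)^k\Big)\Vert z-\widetilde z\Vert_2=\Big(\alpha_1(\delta)^{s+1}\Vert A_\init\Vert_2+\tfrac{1-\alpha_1(\delta)^{s+1}}{1-\alpha_1(\delta)}\beta_1\Big)\Vert z-\widetilde z\Vert_2,
\]
using the finite geometric sum (with the convention that the last factor equals $s+1$ when $\alpha_1(\delta)=1$). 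Dividing by $nC$ yields the asserted bound, which holds on $\mathcal A_\delta$ and hence with probability at least $1-\delta$.

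\emph{Main obstacle.} Everything except Step 2 is routine linear-algebra and induction bookkeeping; the only genuine point is the interplay between the pathwise recursion and the probabilistic conclusion. The local Lipschitz constant changes from iteration to iteration and depends on the random pair $(\xi,\widetilde\xi)$, so one must resist looking for an $\omega$-uniform bound; instead one observes that the single event of the maximum of the $S+1$ local Lipschitz constants staying below the $(1-\delta)$-quantile already carries probability $1-\delta$ by the definition of $F_S$, and that the affine recursion is monotone in its coefficient so that the quantile may be substituted for all pathwise values at once. A minor technical point worth noting is that $F_S$ is a bona fide CDF (nondecreasing, right-continuous), which is what guarantees $\Prob(\,\cdot\le F_S^{-1}(1-\delta))\ge 1-\delta$.
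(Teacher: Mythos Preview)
Your proposal is correct and follows essentially the same approach as the paper's proof: subtract the two semi-implicit recursions, bound the explicit-update part via the quantile of the local Lipschitz constant (on an event of probability $1-\delta$) and the data part via $\beta_1\Vert z-\widetilde z\Vert_2$, then unroll the resulting affine recursion using the geometric series and the initial estimate $\Vert x_0-\widetilde x_0\Vert_2\le\Vert A_\init\Vert_2\Vert z-\widetilde z\Vert_2$. Your write-up is in fact more explicit than the paper's in isolating the single event $\mathcal A_\delta$ and noting the monotonicity of the recursion in its multiplicative coefficient, but the underlying argument is the same.
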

\begin{proof}
The definition of the semi-implicit scheme~\eqref{eq:stateEquationDiscrete} implies that for any $s=0,\ldots,S-1$ the inequality
\[
\Vert x_{s+1}-\widetilde{x}_{s+1}\Vert_2
\leq\Vert B^{-1}\Vert_2\left(F_S^{-1}(\delta)\Vert x_s-\widetilde{x}_s\Vert_2+\tfrac{T}{S}\Vert A\Vert_2\Vert z-\widetilde{z}\Vert_2\right)
\]
holds true with probability~$1-\delta$.
By taking into account a recursion argument, the geometric series formula $\sum_{i=0}^n q^i=\frac{1-q^{n+1}}{1-q}$, and the estimate $\Vert x_0-\widetilde{x}_0\Vert_2\leq\Vert A_\init\Vert_2\Vert z-\widetilde{z}\Vert_2$ we obtain the desired result.
\end{proof}

\subsection{Stability Analysis w.r.t.~Parameters}
Next, we elaborate on the stability of the proposed approach w.r.t.~variations of the learned parameters~$\theta\in\Theta$.
To this end, we estimate the local Lipschitz constants of the TDV regularizer w.r.t.~both of its arguments in the mean-field setting to derive
upper bounds along the trajectories emanating from the same initial state, but with different parameters~$\theta$ and~$\widetilde{\theta}$.
In detail, the perturbed parameters~$\widetilde{\theta}$ is drawn from a uniform distribution supported on a component-wise relative $\epsilon$-ball around $\theta$.
A recursion argument involving the estimated Lipschitz constants results in computable upper bounds for the norm difference along trajectories associated with~$\theta$ and~$\widetilde{\theta}$.

Let $B_{\epsilon}(\theta)$ be the component-wise relative $\epsilon$-ball around $\theta=(K,K_{j,k,1}^i,K_{j,k,2}^i,w)\in\Theta$ w.r.t~the $\ell^\infty$-norm, i.e.
\begin{align*}
B_{\epsilon}(\theta)=\Big\{&
\widetilde{\theta}=(\widetilde{K},\widetilde{K}_{j,k,1}^i,\widetilde{K}_{j,k,2}^i,\widetilde{w})\in\Theta:\\
&\Vert\widetilde{K}-K\Vert_\infty\leq\epsilon\Vert K\Vert_\infty,
\Vert\widetilde{K}_{j,k,1}^i-K_{j,k,1}^i\Vert_\infty\leq\epsilon\Vert K_{j,k,1}^i\Vert_\infty,\\
&\Vert\widetilde{K}_{j,k,2}^i-K_{j,k,2}^i\Vert_\infty\leq\epsilon\Vert K_{j,k,2}^i\Vert_\infty,
\Vert\widetilde{w}-w\Vert_\infty\leq\epsilon\Vert w\Vert_\infty
\Big\}.
\end{align*}
Further, we denote by $\proj_\Theta:\R^p\to\Theta$ the orthogonal projection onto~$\Theta$, and by~$\mathcal{U}(S)$ the uniform distribution for any set~$S\subset\R^p$.
Then, the cumulative distribution function $F_{S,x}$ of the local Lipschitz constant of the regularizer w.r.t.~its first component is given as
\[
F_{S,x}(L)=\Prob\Big(\max_{s=0,\ldots,S}L_x(\widehat{x}_s(y,\xi,T,\theta),\widehat{x}_s(y,\xi,T,\widetilde{\theta}),T,\theta)\leq L:
(y,\xi)\sim\mathcal{T},\widetilde{\theta}\sim\mathcal{U}(\proj_\Theta(B_{\epsilon}(\theta)))\Big)
\]
for $L\in\R$.
Likewise, we define the local Lipschitz constant of TDV w.r.t.~its second argument as
\[
F_{S,\theta}(L)=\Prob\Big(\max_{s=0,\ldots,S}L_\theta(\widehat{x}_s(y,\xi,T,\widetilde{\theta}),\theta,\widetilde{\theta})\leq L:
(y,\xi)\sim\mathcal{T},\widetilde{\theta}\sim\mathcal{U}(\proj_\Theta(B_{\epsilon}(\theta)))\Big)
\]
for $L\in\R$, where
\[
L_\theta(x,\theta,\widetilde{\theta})
\coloneqq\frac{\Vert D_1\mathrm{R}(x,\theta)-D_1\mathrm{R}(x,\widetilde{\theta})\Vert_2}{\Vert\theta-\widetilde{\theta}\Vert_2}.
\]   
Taking into account the above definitions we can state the stability theorem w.r.t. the learned parameters as follows: 
\begin{theorem}[Stability w.r.t.~parameters]\label{thm:stabilityParameters}
Let $T\in[0,\Tmax]$, $\theta\in\Theta$ and $\widetilde{\theta}\sim\mathcal{U}(\proj_\Theta(B_{\epsilon}(\theta)))$.
We denote by $z=Ay+\xi$ an observation associated with $(y,\xi)\sim\mathcal{T}$, and
by $\{x_s\}_{s=0}^S,\{\widetilde{x}_s\}_{s=0}^S\in(\R^{nC})^{S+1}$ two states satisfying~\eqref{eq:stateEquationDiscrete} with initial conditions $x_0=\widetilde{x}_0=A_\init z$ and control parameters $(T,\theta)$ and $(T,\widetilde{\theta})$, respectively.
Then, the inequality
\begin{equation}
\frac{1}{nC}\Vert x_{s+1}-\widetilde{x}_{s+1}\Vert_2\leq\frac{1}{nC}\frac{1-\alpha_2(\delta)^{s+1}}{1-\alpha_2(\delta)}\beta_2(\delta)\Vert\theta-\widetilde{\theta}\Vert_2
\label{eq:stabilityParameters}
\end{equation}
holds true with probability~$1-\delta$ for $\delta\in[0,1)$, where
\[
\alpha_2(\delta)=\Vert B^{-1}\Vert_2 F_{S,x}^{-1}(1-\tfrac{\delta}{2}),\quad\beta_2(\delta)=\Vert B^{-1}\Vert_2\tfrac{T}{S}F_{S,\theta}^{-1}(1-\tfrac{\delta}{2})
\]
for $B\coloneqq\Id+\frac{T}{S}A^\top A$.
\end{theorem}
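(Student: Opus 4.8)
The plan is to mirror the recursion argument in the proof of Theorem~\ref{thm:stability}, but now to control two distinct sources of discrepancy along the trajectory — one coming from the perturbed state, one from the perturbed parameters — and to combine the corresponding high-probability events via a union bound.

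First I would subtract the two instances of the semi-implicit update~\eqref{eq:stateEquationDiscrete}. Since $z$, $T$, and hence $B=\Id+\tfrac{T}{S}A^\top A$ are shared, the data term $A^\top z$ cancels and one obtains
\[
x_{s+1}-\widetilde{x}_{s+1}=B^{-1}\Big[(x_s-\widetilde{x}_s)-\tfrac{T}{S}\big(D_1\mathrm{R}(x_s,\theta)-D_1\mathrm{R}(\widetilde{x}_s,\widetilde{\theta})\big)\Big].
\]
The next step is to split the regularizer difference by inserting $\pm\tfrac{T}{S}D_1\mathrm{R}(\widetilde{x}_s,\theta)$, so that one summand is exactly the increment of the explicit update $x\mapsto x-\tfrac{T}{S}D_1\mathrm{R}(x,\theta)$ evaluated between $x_s$ and $\widetilde{x}_s$ at fixed $\theta$, while the other is the increment of $D_1\mathrm{R}(\widetilde{x}_s,\cdot)$ between $\theta$ and $\widetilde{\theta}$. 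Taking $\ell^2$-norms, using the triangle inequality and submultiplicativity, and recognizing the two quotients as the local Lipschitz constants $L_x$ and $L_\theta$ entering the definitions of $F_{S,x}$ and $F_{S,\theta}$, yields
\[
\Vert x_{s+1}-\widetilde{x}_{s+1}\Vert_2\leq\Vert B^{-1}\Vert_2\,L_x(x_s,\widetilde{x}_s,T,\theta)\,\Vert x_s-\widetilde{x}_s\Vert_2+\Vert B^{-1}\Vert_2\tfrac{T}{S}\,L_\theta(\widetilde{x}_s,\theta,\widetilde{\theta})\,\Vert\theta-\widetilde{\theta}\Vert_2.
\]

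Then I would invoke the two cumulative distribution functions. By definition, the event $\{\max_{s=0,\ldots,S}L_x(\widehat{x}_s(y,\xi,T,\theta),\widehat{x}_s(y,\xi,T,\widetilde{\theta}),T,\theta)\leq F_{S,x}^{-1}(1-\tfrac{\delta}{2})\}$ has probability $1-\tfrac{\delta}{2}$, and likewise $\{\max_{s=0,\ldots,S}L_\theta(\widehat{x}_s(y,\xi,T,\widetilde{\theta}),\theta,\widetilde{\theta})\leq F_{S,\theta}^{-1}(1-\tfrac{\delta}{2})\}$ has probability $1-\tfrac{\delta}{2}$; a union bound guarantees that both hold simultaneously with probability at least $1-\delta$. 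On this event the estimate above becomes the affine recursion $\Vert x_{s+1}-\widetilde{x}_{s+1}\Vert_2\leq\alpha_2(\delta)\Vert x_s-\widetilde{x}_s\Vert_2+\beta_2(\delta)\Vert\theta-\widetilde{\theta}\Vert_2$ with $\alpha_2(\delta)$ and $\beta_2(\delta)$ as in the statement. Since the two trajectories start from the same initial state, $\Vert x_0-\widetilde{x}_0\Vert_2=0$, so induction on $s$ combined with the geometric series formula $\sum_{i=0}^{s}\alpha_2(\delta)^i=\frac{1-\alpha_2(\delta)^{s+1}}{1-\alpha_2(\delta)}$ gives $\Vert x_{s+1}-\widetilde{x}_{s+1}\Vert_2\leq\frac{1-\alpha_2(\delta)^{s+1}}{1-\alpha_2(\delta)}\beta_2(\delta)\Vert\theta-\widetilde{\theta}\Vert_2$, and dividing by $nC$ produces~\eqref{eq:stabilityParameters}.

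The only genuinely delicate point is the probability bookkeeping: because two random Lipschitz bounds must hold along the entire trajectory at once, each quantile must be taken at level $1-\tfrac{\delta}{2}$ so that the union bound closes at confidence $1-\delta$ — this is precisely why $\alpha_2$ and $\beta_2$ are defined through $F_{S,x}^{-1}(1-\tfrac{\delta}{2})$ and $F_{S,\theta}^{-1}(1-\tfrac{\delta}{2})$ rather than at level $1-\delta$. Everything else is a routine adaptation of the recursion in Theorem~\ref{thm:stability}; in particular no regularity of $\mathrm{R}$ beyond what the earlier sections provide is needed, and the telescoping step via the geometric series is identical.
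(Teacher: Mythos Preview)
Your proposal is correct and follows essentially the same route as the paper's proof: subtract the two updates, split the regularizer increment so that the $F_{S,x}$ and $F_{S,\theta}$ quantiles apply, combine the two high-probability events by a union bound to reach confidence $1-\delta$, and close with the same geometric-series recursion starting from $\Vert x_0-\widetilde{x}_0\Vert_2=0$. Your write-up is in fact more careful than the paper's terse version --- you make explicit why the split must produce the increment of the explicit update (matching the definition of $L_x$) and why the $1-\tfrac{\delta}{2}$ levels are needed.
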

\begin{proof}
Again, using the definition of~$g$ yields
\[
\Vert x_{s+1}-\widetilde{x}_{s+1}\Vert_2
\leq\alpha_2(\delta)\Vert x_s-\widetilde{x}_s\Vert_2+\beta_2(\delta)\Vert\theta-\widetilde{\theta}\Vert_2
\]
with probability~$1-\delta$, where we exploited
\[
\Vert\mathrm{R}(x_s,\theta)-\mathrm{R}(\widetilde{x}_s,\widetilde{\theta})\Vert_2\leq F_{S,x}^{-1}(1-\tfrac{\delta}{2})\Vert x_s-\widetilde{x}_s\Vert_2+F_{S,\theta}^{-1}(1-\tfrac{\delta}{2})\Vert \theta-\widetilde{\theta}\Vert_2.
\]
By exploiting a recursion argument and noting that the initial states coincide the theorem follows.
\end{proof}
Hence, this theorem provides a computable upper bound for the norm difference of two states w.r.t.~perturbations of the TDV parameters.
In particular, if $(T,\theta)$ is a local minimizer of the cost functional~\eqref{eq:discreteOptimalControl}, then the stability analysis quantifies the robustness of the trajectories.

\section{Numerical Results}
In this section, we present numerical results for additive Gaussian denoising, medical reconstruction, and single image super-resolution.
To get an intuition for the local behavior of the learned TDV regularizer, we pursue a nonlinear eigenfunction analysis.
Moreover, we perform a stability analysis including adversarial attacks and worst case generalization error estimates to demonstrate the robustness of the proposed method.

\subsection{Training Details}
In all experiments, we use the BSDS400 dataset~\cite{MaFo01} for training, which determines the discrete probability measure according to Remark~\ref{rem:discreteOCP}.
Thus, the control parameters $(T,\theta)$ are computed by minimizing the \emph{discretized sampled optimal control problem}
\[
\min_{T\in[0,\Tmax],\theta\in\Theta}\frac{1}{N}\sum_{i=1}^N\loss(\widehat{x}_S(y^i,\xi^i,T,\theta)-y^i),
\]
where $\loss(x)=\Vert x\Vert_2^2$ for Gaussian denoising and $\loss(x)=\sum_{i=1}^{nC}\sqrt{x_i^2+\epsilon^2}$ for single image super-resolution with $\epsilon=0.01$.
We augment data of patch size $93\times 93$ by randomly flipping the images horizontally or vertically, and by rotating the images by multiples of $90^\circ$.
The ADAM optimizer~\cite{KiBa15} is employed with a mini batch size of~$32$ using $10^5$ iterations, $\beta_1=0.9$ and $\beta_2=0.999$, where the initial learning rate $4\cdot 10^{-4}$ is halved every $25000$ iterations.
The noise~$\xi$ is drawn randomly in each iteration.

\subsection{Additive Gaussian Denoising}
As a first task, we consider additive Gaussian denoising implying $A=\Id\in\R^{nC\times nC}$, $\xi\sim\mathcal{N}(0,\sigma^2\Id)$ and $l=nC$ for $C=1$ (gray-scale images) or $C=3$ (color images).

In the first experiments, we perform an ablation study of the number of blocks~$b$, the number of scales~$a$ and the potential function.
To this end, we evaluate the performance of the resulting TDV regularizers for additive gray-scale Gaussian denoising by computing the expected $\PSNR$ value on the BSDS68 dataset.
Figure~\ref{fig:DenoisingS_psnr_T} depicts the expected $\PSNR$ values (top) and the optimal stopping times (bottom) as functions of the depth~$S$ for color-coded TDV regularizers with $a,b\in\{2,3,4\}$.
In all cases, the performance increases until $S\approx 10$, beyond this point the curves saturate.
Thus, in all subsequent experiments the TDV regularizer is trained for $S=10$.
Moreover, the expected $\PSNR$ values increase with the number of learnable parameters, which is correlated with the number of blocks and scales.
However, beyond a certain complexity the performance increase saturates, that is why we use the TDV$_3^3$ regularizer in all further experiments.

Table~\ref{tab:potentialFunction} lists the $\PSNR$ values for three possible choices of the potential functions.
It turns out that the simplest potential function $\psi(x)=x$, which is neither bounded nor coercive, performs slightly better than the other potential functions.
For this reason, we use $\psi(x)=x$ in all further experiments.
\begin{table}
\caption{Different possible choices for potential functions~$\psi$ evaluated on gray-scale Gaussian denoising ($\sigma=25$).}
\label{tab:potentialFunction}
\centering
\begin{tabular}{r | c c c}
\toprule[1pt]
$\psi$ & $\ln\cosh(x)$ & $\frac{1}{2}\log(1+x^2)$ & $x$ \\ \midrule[.5pt]
$\PSNR$  & 29.3596 & 29.3662 & 29.3722\\
\bottomrule[1pt]
\end{tabular}
\end{table}
\begin{figure}
\centering
\includegraphics[width=.7\linewidth]{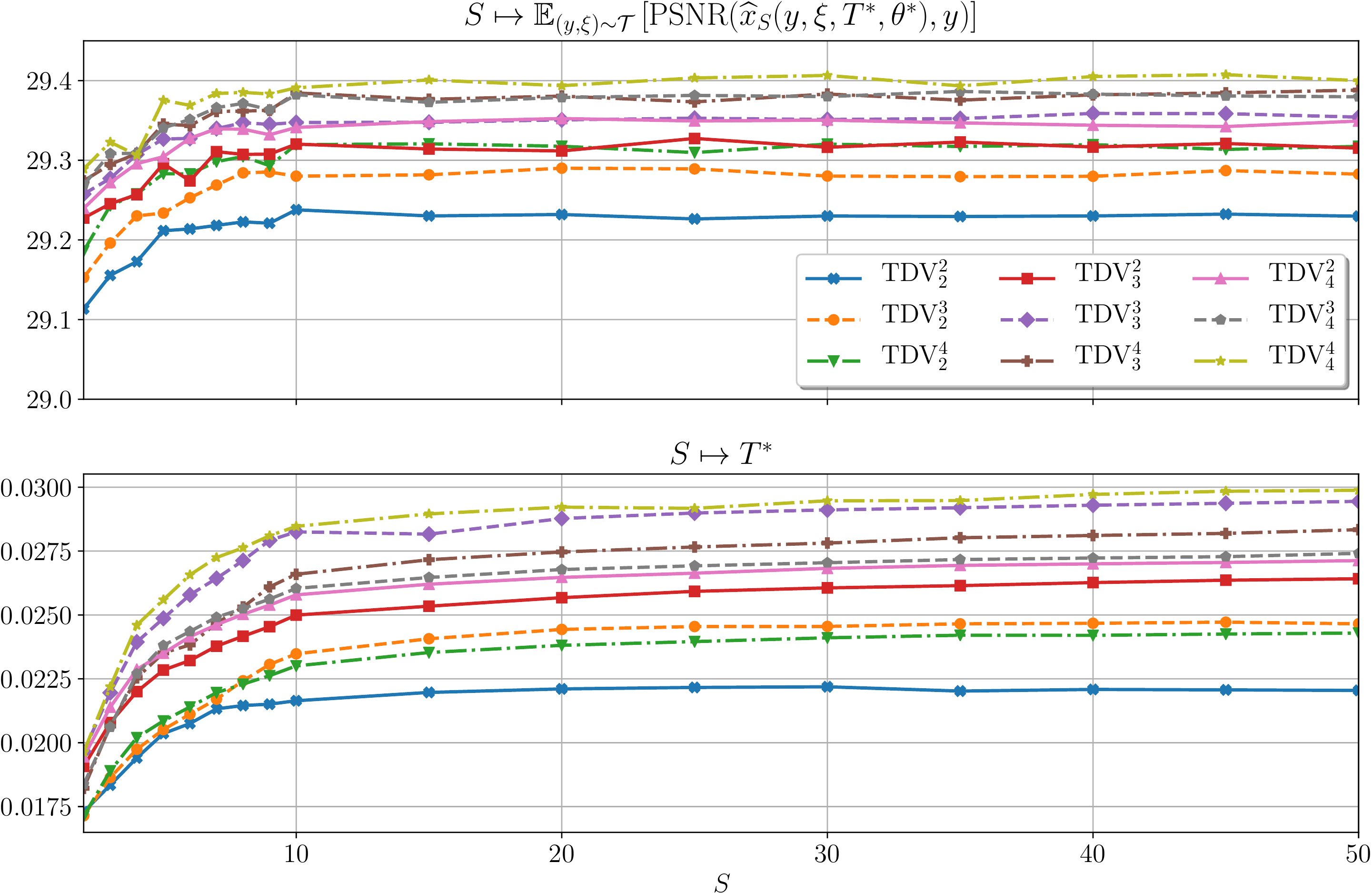}
\caption{Expected PNSR value and optimal stopping time depending on~$S$ for various TDV regularizers (gray-scale Gaussian denoising, $\sigma=25$).}
\label{fig:DenoisingS_psnr_T}
\end{figure}

In what follows, we discuss the importance of the stopping time for the quality of the output image.
To this end, we plot the $\PSNR$ values of all BSDS68 test images and the corresponding expected $\PSNR$ value (red line) as a function of the stopping time (Figure~\ref{fig:DenoisingOptimalStoppingTime}, top).
All curves approximately peak around the same optimal stopping time~$T^\ast=0.172$, which is also identified by the first order condition of Theorem~\ref{thm:firstOrderConditionDiscrete} (Figure~\ref{fig:DenoisingOptimalStoppingTime}, bottom).
Further, Figure~\ref{fig:DenoisingOptimalStoppingSequence} presents sequences of output images for gray-scale and color Gaussian denoising trained for $S=10$ to visually verify the importance of the proper choice of the optimal stopping time.
Starting from the noisy input image~$x_0$ (second column), the noise level is gradually decreased until the output image~$x_{10}$ (fourth column) is obtained.
Beyond this point, the algorithm generates oversmoothed images and details are lost.
\begin{figure}
\centering
\includegraphics[width=.7\linewidth]{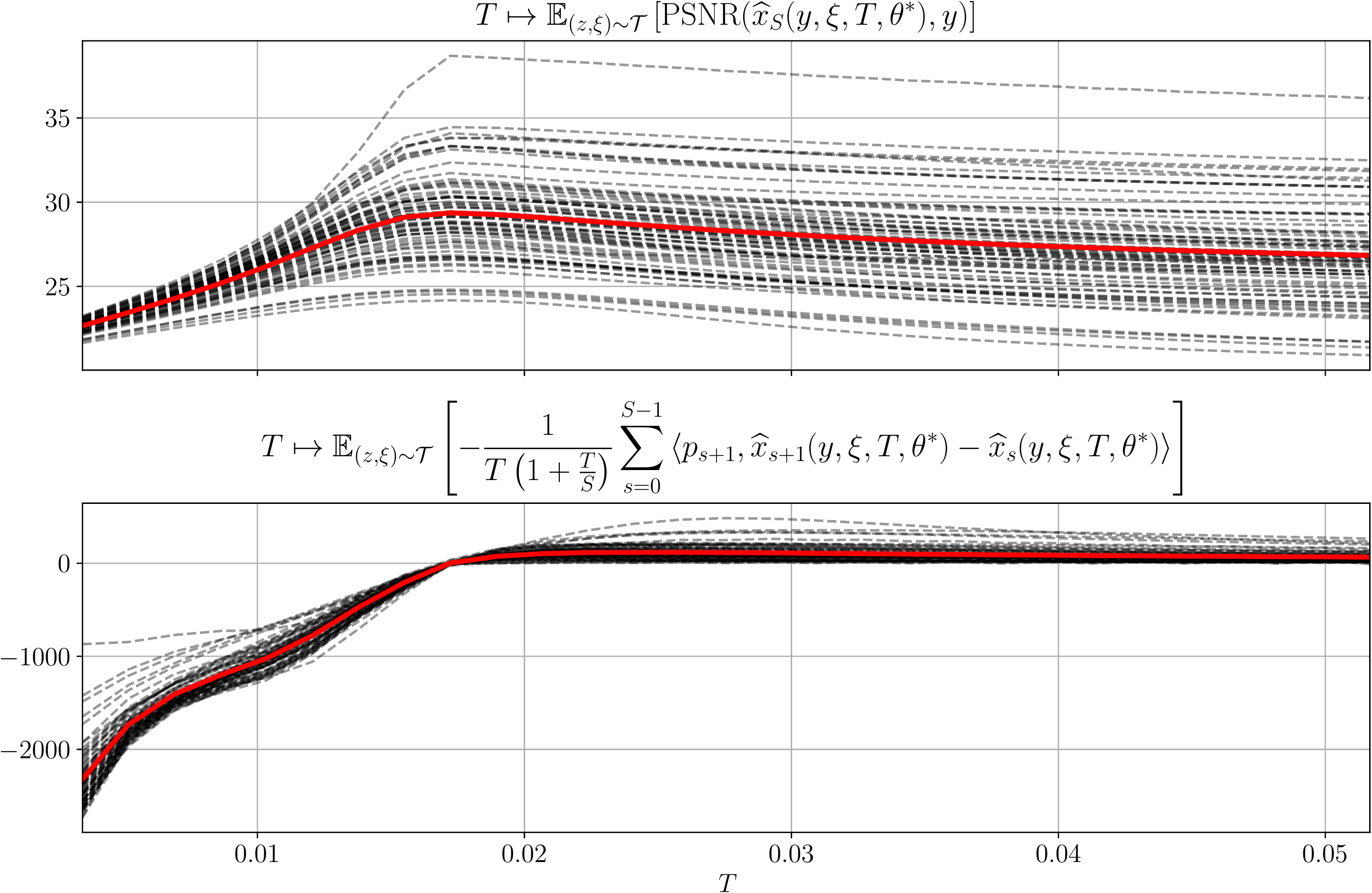}
\caption{First order optimality condition of the stopping time for gray-scale Gaussian denoising ($\sigma=25$) using TDV$_3^3$ and the BSDS68 dataset.}
\label{fig:DenoisingOptimalStoppingTime}
\end{figure}

\begin{figure}
\centering

\includegraphics[width=\linewidth]{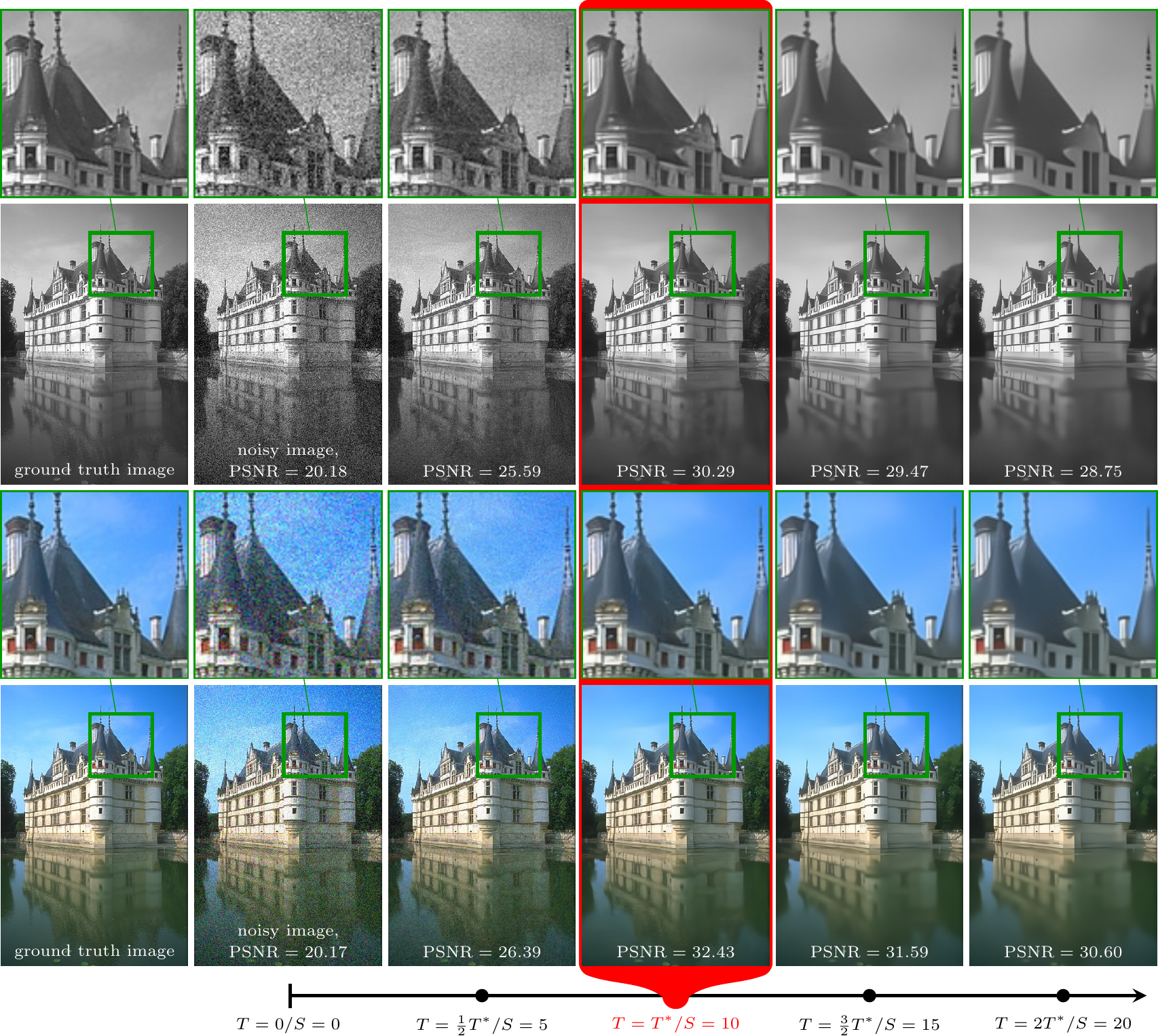}

\caption{From left to right: Ground truth, noisy input with noise level~$\sigma=25$ and resulting output of TDV$_3^3$ for $(S,T)\in\{(5,\frac{1}{2}T^\ast),(10,T^\ast),(15,\frac{3}{2}T^\ast),(20,2T^\ast)\}$
for Gaussian denoising of gray-scale images (top) and color images (bottom).
Note that the best images are framed in red and are obtained at the optimal stopping time, which is $T^\ast=0.172$ for gray-scale denoising and $T^\ast=0.0247$ for color denoising.}
\label{fig:DenoisingOptimalStoppingSequence}
\end{figure}

A quantitative comparison of expected $\PSNR$ values for additive gray-scale and color Gaussian denoising for~$\sigma\in\{15,25,50\}$ on various image datasets is listed in Table~\ref{tab:awgn} and Table~\ref{tab:awgnColor}. 
In the TDV$_{3,25}^3$ column, the $\PSNR$ values of our proposed TDV regularizer with three macro-blocks on three scales solely trained for $\sigma=25$ are presented. 
To apply the TDV$^3_{3,25}$ model to different noise levels, we first rescale the noisy images~$\overline{x}_\init=\overline{z}=\tfrac{25}{\sigma}z$, then apply the learned scheme~\eqref{eq:stateEquationDiscrete}, and obtain the results via~$x_S=\tfrac{\sigma}{25}\overline{x}_S$.
In the last column of Table~\ref{tab:awgn}, the $\PSNR$ values of the TDV regularizer--\emph{individually} trained for each noise level--are listed.
For color Gaussian denoising we only present results obtained by TDV$_{3,25}^3$ to follow the evaluation standard of the related methods.
We achieve state-of-the-art results for gray-scale and color image denoising compared with models of similar complexity.
Only FOCNet~\cite{JiLi19} performs slightly better for gray-scale images at the expense of more than hundred times more trainable parameters.
Finally, the TDV regularizers yield higher $\PSNR$ values if its parameters are individually optimized for each noise level.
\begin{table}
\caption{Comparison of expected $\PSNR$ values for additive gray-scale Gaussian denoising for~$\sigma\in\{15,25,50\}$ on various image datasets.}
\label{tab:awgn}
\centering
\resizebox{\linewidth}{!}{
\begin{tabular}{l c*{7}{c} c}
\toprule[1.5pt]
Dataset & $\sigma$ & BM3D~\cite{DaFo07} & TNRD~\cite{ChPo17} & DnCNN~\cite{ZhZu17} & FFDNet~\cite{ZhZu18} & N$^3$Net~\cite{PlRo18} & FOCNet~\cite{JiLi19} & TDV$_{3,25}^3$ & TDV$_3^3$ \\ \midrule[1pt]
\multirow{3}{*}{Set12} 
& 15 & 32.37 & 32.50 & 32.86 & 32.75 &  -    & 33.07 & 32.93 & 33.02\\
& 25 & 29.97 & 30.05 & 30.44 & 30.43 & 30.55 & 30.73 & 30.68 & 30.68\\
& 50 & 26.72 & 26.82 & 27.18 & 27.32 & 27.43 & 27.68 & 27.52 & 27.59\\ \midrule
\multirow{3}{*}{BSDS68} 
& 15 & 31.08 & 31.42 & 31.73 & 31.63 &  -    & 31.83 & 31.76 & 31.84\\
& 25 & 28.57 & 28.92 & 29.23 & 29.19 & 29.30 & 29.38 & 29.37 & 29.37\\
& 50 & 25.60 & 25.97 & 26.23 & 26.29 & 26.39 & 26.50 & 26.42 & 26.45\\ \midrule
\multirow{3}{*}{Urban100}
& 15 & 32.34 & 31.98 & 32.67 & 32.43 &  -    & 33.15 & 32.66 & 32.91 \\
& 25 & 29.70 & 29.29 & 29.97 & 29.92 & 30.19 & 30.64 & 30.38 & 30.38 \\
& 50 & 25.94 & 25.71 & 26.28 & 26.52 & 26.82 & 27.40 & 26.94 & 27.04 \\ \midrule[1pt]
\# Parameters
&    &       & 26,645 & 555,200 & 484,800 & 705,895 & 53,513,120 & 387,394 & 387,394\\
\bottomrule[1.5pt]
\end{tabular}
}
\end{table}
\begin{table}
\caption{Comparison of expected $\PSNR$ values for additive color Gaussian denoising for~$\sigma\in\{15,25,50\}$ on various image datasets.}
\label{tab:awgnColor}
\centering
\begin{tabular}{l c*{4}{c} c}
\toprule[1.5pt]
Dataset & $\sigma$ & BM3D~\cite{DaFo07} & CDnCNN~\cite{ZhZu17} & FFDNet~\cite{ZhZu18} & TDV$_{3,25}^3$ \\ \midrule[1pt]
\multirow{3}{*}{CBSDS68} 
& 15 & 33.52 & 33.89 & 33.87 & 34.12 \\
& 25 & 30.71 & 31.23 & 31.21 & 31.53 \\
& 50 & 27.38 & 27.92 & 27.96 & 28.26 \\ \midrule
\multirow{3}{*}{Kodak24} 
& 15 & 34.28 & 34.48 & 34.63 & 35.01 \\
& 25 & 31.68 & 32.03 & 32.13 & 32.59 \\
& 50 & 28.46 & 28.85 & 28.98 & 29.44 \\ \midrule
\multirow{3}{*}{McMaster}
& 15 & 34.06 & 33.44 & 34.66 & 34.55 \\
& 25 & 31.66 & 31.51 & 32.35 & 32.47 \\
& 50 & 28.51 & 28.61 & 29.18 & 29.41 \\ \midrule[1pt]
\# Parameters
&    &       & 668,803 & 852,108 & 387,970\\
\bottomrule[1.5pt]
\end{tabular}
\end{table}

\subsection{Computed Tomography Reconstruction}
To demonstrate the broad applicability of the proposed TDV regularizer, we perform a two-dimensional computed tomography (CT) reconstruction
using the TDV$_{3,25}^3$ regularizer, which was trained for gray-scale Gaussian image denoising and $S=10$.
We stress that the regularizer is applied \emph{without} any additional training of the TDV parameters.

The task of computed tomography is the reconstruction of an image given a set of projection measurements called sinogram, in which
the detectors of the CT scanner measure the intensity of attenuated X-ray beams.
Here, we use the linear attenuation model introduced in~\cite{HaMu18}, where the attenuation is proportional to the intersection area of a triangle, 
which is spanned by the X-ray source and a detector element, and the area of an image element.
In detail, the sinogram~$z$ of an image~$x$ is computed by~$z=A_R x$,
where $A_R\in\R^{l\times n}$ is the lookup-table based area integral operator of~\cite{HaMu18} for $R$ angles and $768$ projections implying $l=768\cdot R$.
Typically, a fully sampled acquisition consists of $2304$~angles.
For this task, we consider the problem of angular undersampled CT~\cite{ChTa08}, where only a fraction of the angles are measured.
We use a 4-fold ($R=576$) and 8-fold ($R=288$) angular undersampling to reconstruct a representative image of the MAYO dataset~\cite{McBa17} with $n=768\times768$.
To account for an imbalance of regularization and data fidelity, we manually scale the data fidelity term by~$\lambda>0$, i.e. 
\[
\mathrm{D}(x,z)\coloneqq\frac{\lambda}{2}\Vert A_Rx-z\Vert_2^2.
\]
The resulting smooth variational problem is optimized using accelerated gradient descent with Lipschitz backtracking using $1000$~steps as discussed in~\cite{ChPo16}.

We present qualitative and quantitative results for CT reconstruction in Figure~\ref{fig:CT} for a single abdominal CT image.
As an initialization, we perform $50$~steps of a conjugate gradient method on the data fidelity term (first and last column).
Using the proposed TDV$_{3,25}^3$ regularizer, we are able to suppress the undersampling artifacts while preserving, for instance, the fine vessels in the liver.
This highlights that the learned regularizer can be effectively applied as a generic regularizer for linear inverse problems without any transfer learning,
which is a particular benefit of the variational structure of the proposed approach.

\begin{figure}
\includegraphics[width=\linewidth]{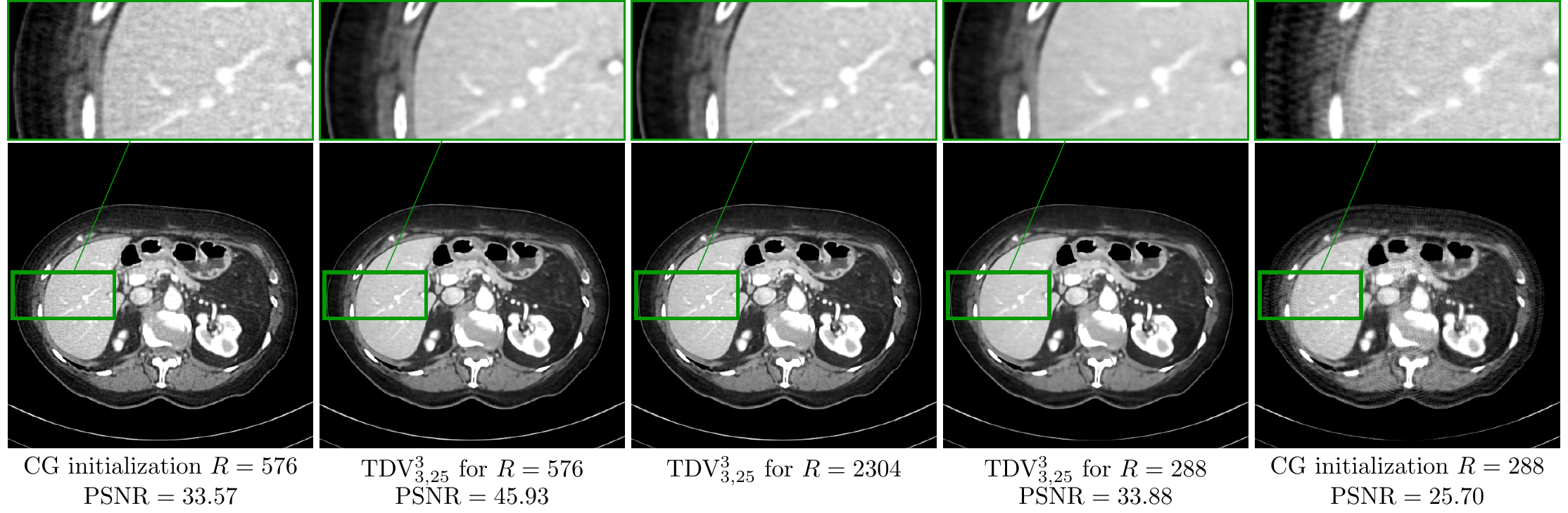}
\caption{Conjugate gradient reconstruction for 4/8-fold angular undersampled CT task (first/fifth image),
results obtained by using the TDV$_{3,25}^3$ regularizer with $\lambda=500\cdot10^3$ for 4-fold (second image) and $\lambda=10^6$ for 8-fold undersampling (fourth image), and fully sampled reference reconstruction using the TDV$^3_{3,25}$ regularizer with $\lambda=125\cdot10^3$ (third image).}
\label{fig:CT}
\end{figure}

\subsection{Magnetic Resonance Imaging Reconstruction}
In what follows, the flexibility of our regularizer TDV$_{3,25}^3$ learned for denoising and $S=10$ is shown for accelerated magnetic resonance imaging (MRI)
\emph{without} any further adaption of~$\theta$.

In accelerated MRI, k--space data $\{z_i\}_{i=1}^{N_C}\subset\C^n$ is acquired using $N_C$ parallel coils, each measuring a fraction of the full k--space to reduce acquisition time~\cite{HaKl18}.
Here, we use the data fidelity term
\[
\mathrm{D}(x,\{z_i\}_{i=1}^{N_C})=\frac{\lambda}{2}\sum_{i=1}^{N_C}\Vert M_RFC_ix-z_i\Vert_2^2,
\]
where $\lambda>0$ is a manually adjusted weighting parameter,
$M_R\in\R^{n\times n}$ is a binary mask for $R$-fold Cartesian undersampling, $F\in\C^{n\times n}$ is the discrete Fourier transform, and $C_i\in\C^{n\times n}$ are sensitivity maps, which are estimated using ESPIRiT~\cite{UeLa14}.
For further details we refer the reader to~\cite{HaKl18}.
We use 4-fold and 6-fold Cartesian undersampled MRI data to reconstruct a sample knee image.
Again, we minimize the resulting variational energy by accelerated gradient descent with Lipschitz backtracking using $1000$~steps~\cite{ChPo16}.

We perform an evaluation of the proposed approach on a representative slice of an undersampled MRI knee acquisition.
The slice has a resolution of $n=320\times320$ and $N_C=15$ receiver coils were used during the acquisition.
Figure~\ref{fig:MRI} depicts qualitative results and $\PSNR$ values for the reconstruction of 4-fold and 6-fold undersampled k--space data.
The first and last columns show the initial images obtained by applying the adjoint operator to the undersampled data.
In the second and fourth column we depict the results obtained using TDV$_{3,25}^3$.
Although the TDV regularizer was not trained to account for undersampling artifacts, almost all artifacts are removed in the reconstructions and only some details in the bone are lost.
This highlights the versatility and effectiveness of the proposed TDV regularizer since both CT and MRI reconstruction can be properly addressed \emph{without} any fine-tuning of the learned parameters.

\begin{figure}
\includegraphics[width=\linewidth]{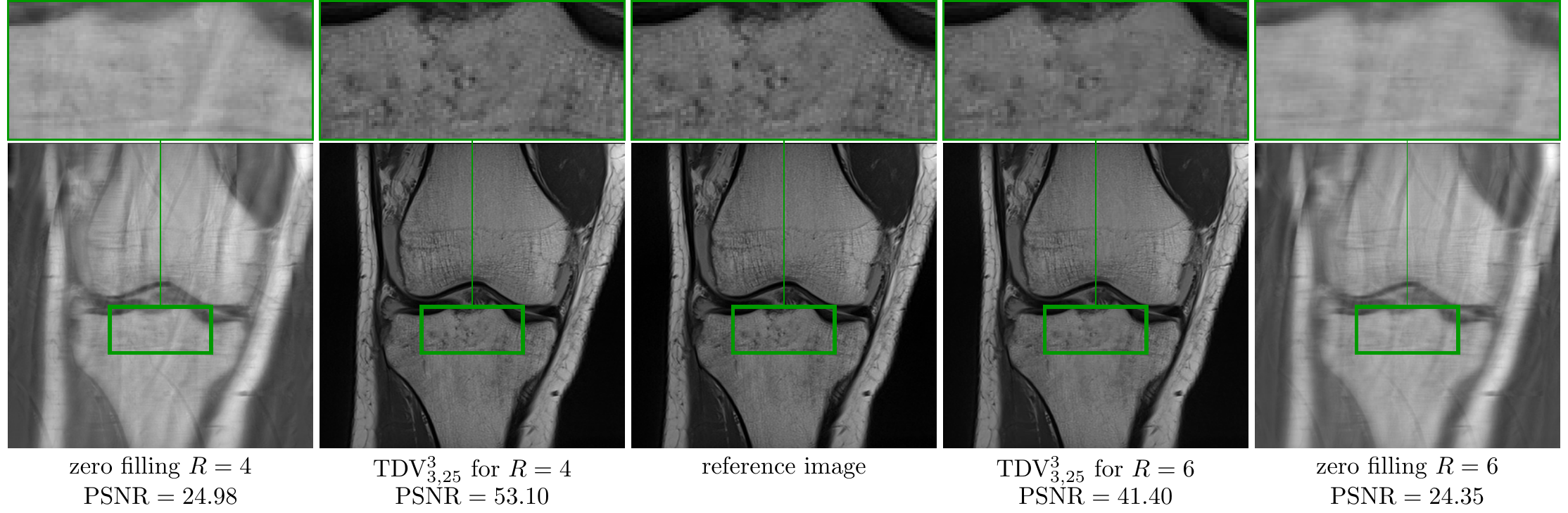}
\caption{Zero filling initialization for acceleration factors~$R\in\{4,6\}$ (first/fifth image), output using the TDV$^3_{3,25}$ regularizer with $\lambda=1000$ for $R=4$ (second image) and $\lambda=1500$ for $R=6$ (fourth image), and fully sampled reference (third image).}
\label{fig:MRI}
\end{figure}

\subsection{Single Image Super-resolution}
In this subsection, we present numerical results for single image super-resolution (SISR).
Here, the linear operator~$A\in\R^{nC/\gamma^2\times nC}$ is given as a downsampling operator, where $\gamma\in\{2,3,4\}$ denotes the scale factor.
In detail, its adjoint operator coincides with MATLAB\textsuperscript{\textregistered}'s bicubic upsampling operator \texttt{imresize},
which is an implementation of a scale factor-dependent interpolation convolution kernel in conjunction with a stride.
Since this restoration problem substantially differs from Gaussian image denoising, the parameters of the TDV regularizer have to be optimized for this task individually.

Let $nC$ be a multiple of $\gamma^2$ and $y\in\R^{nC}$ be a full resolution ground truth image patch uniformly drawn from the BSDS400 dataset.
The observations $z=Ay+\xi\in\R^{nC/\gamma^2}$ used for training are corrupted by additive Gaussian noise~$\xi$ with $\sigma\in\{0,7.65\}$.
For the initialization we set $A_\init=\gamma A^\top$.
The proximal map $(\Id+\frac{T}{S}A^\top A)^{-1}$ is efficiently computed in Fourier space as advocated in~\cite{ZhWe16}.
Here, all results are obtained by training a TDV$_3^3$ regularizer for each scale factor individually.

We compare our SISR results with numerous state-of-the-art networks of similar complexity and list expected $\PSNR$ values of the Y-channel in the YCbCr color space over test datasets in Table~\ref{tab:SR}.
For the BSDS100 dataset, our proposed method achieves similar results as OISR-LF-s~\cite{HeMo19} with only one third of the trainable parameters.
Figure~\ref{fig:OptimalStoppingSequenceSR} depicts a restored sequence of images for SISR with scale factor~$4$ using TDV$_3^3$ for a representative sample image of the Set14 dataset.
Starting from the low resolution initial image, interfaces are gradually sharpened and the best quality is achieved for~$T=T^\ast$.
Beyond this point, interfaces are artificially intensified.
\begin{figure}
\includegraphics[width=\linewidth]{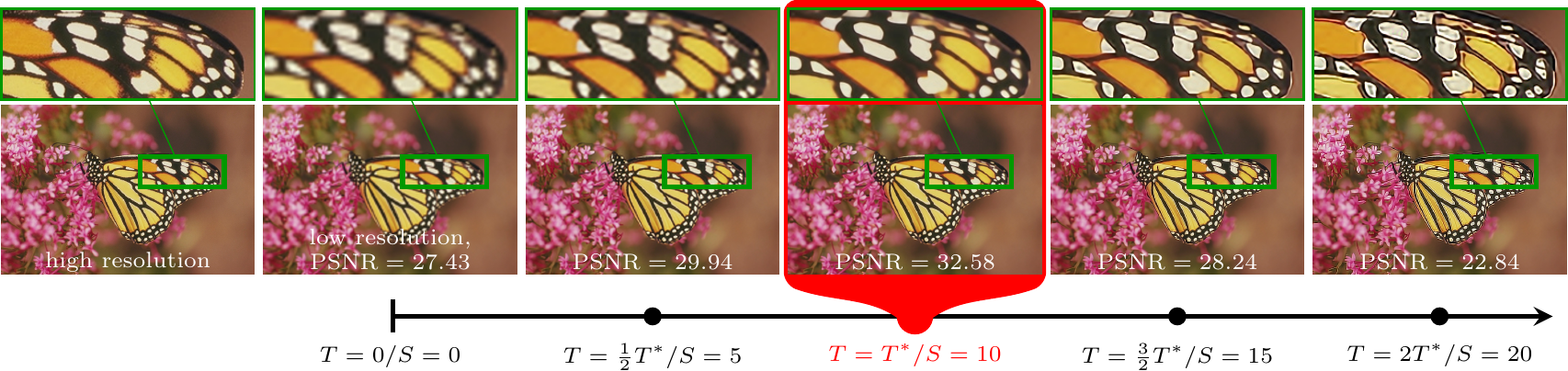}
\caption{
From left to right: High resolution ground truth image, low resolution image with~$\gamma=4$ and $\sigma=0$, and resulting output of TDV$_3^3$ for $(S,T)\in\{(5,\frac{1}{2}T^\ast),(10,T^\ast),(15,\frac{3}{2}T^\ast),(20,2T^\ast)\}$,
where the optimal stopping time is $T^\ast=0.043$.
Note that the best image is framed in red.
}
\label{fig:OptimalStoppingSequenceSR}
\end{figure}
\begin{table}
\caption{$\PSNR$ values of various state-of-the art networks for single image super-resolution ($\sigma=0$) with a comparable number of parameters.}
\label{tab:SR}
\centering
\resizebox{\linewidth}{!}{
\begin{tabular}{l c c*{4}{c} c}
\toprule[1.5pt]
Dataset & Scale & MemNet~\cite{TaYa17} & VDSR~\cite{KiLe16} & DnCNN-3~\cite{ZhZu17} & DRRN~\cite{TaYa17a} & OISR-LF-s~\cite{HeMo19} & TDV$_3^3$ \\ \midrule[1pt]
\multirow{3}{*}{Set14} 
& $\times2$& 33.28 & 33.03 & 33.03 & 33.23 & 33.62 & 33.35\\
& $\times3$& 30.00 & 29.77 & 29.81 & 29.96 & 30.35 & 29.94\\
& $\times4$& 28.26 & 28.01 & 28.04 & 28.21 & 28.63 & 28.41\\ \midrule
\multirow{3}{*}{BSDS100} 
& $\times2$ & 32.08 & 31.90 & 31.90 & 32.05 & 32.20 & 32.17\\
& $\times3$ & 28.96 & 28.82 & 28.85 & 28.95 & 29.11 & 28.96\\
& $\times4$ & 27.40 & 27.29 & 27.29 & 27.38 & 27.60 & 27.55\\ \midrule[1pt]
\# Parameters & & 585,435 & 665,984 & 666,561 & 297,000& 1,370,000 & 387,970\\
\bottomrule[1.5pt]
\end{tabular}
}
\end{table}

\subsection{Eigenfunction Analysis}
To get a better understanding of the local behavior of the proposed TDV regularizer, we perform a nonlinear eigenfunction analysis~\cite{Gi18}.
To this end, we compute \emph{nonlinear eigenfunctions} by minimizing the variational problem
\begin{equation}
\min_{x\in[0,1]^{nC}}\tfrac{1}{2}\Vert D_1\mathrm{R}(x,\theta)-\Lambda(x)x\Vert_2^2,
\label{eq:eigenfunctionAnalysis}
\end{equation}
where the generalized Rayleigh quotient defining the \emph{eigenvalues} is given by
\[
\Lambda(x)=\frac{\langle D_1\mathrm{R}(x,\theta),x\rangle}{\Vert x\Vert_2^2}.
\]
Note that \eqref{eq:eigenfunctionAnalysis} enforces $D_1\mathrm{R}(x,\theta)\approx\Lambda(x)x$ for images with range space~$[0,1]$.
We use Nesterov's projected accelerated gradient descent~\cite{Ne83} to perform the optimization in~\eqref{eq:eigenfunctionAnalysis}.
Due to the nonconvexity of this minimization problem the resulting eigenfunctions strongly depend on the initialization.

Figure~\ref{fig:Eigenfunctions} depicts six images from the BSDS400 dataset (first row), which are used as the initialization, along with their eigenfunctions for gray-scale denoising (second row, $\sigma=25$), color denoising (third row, $\sigma=25$) and SISR (last row, $\gamma=2$ and $\sigma=0$).
For denoising (second and third row), the generated eigenfunctions are composed of piecewise constant regions with smooth edges resulting in cartoon-like simplifications and contrast enhancement (see e.g.~second/third column).
Textured regions of the initial image are transformed into repetitive structures such as stripes and dots.
In contrast, the eigenfunctions for SISR (fourth column) exhibit fine-scaled texture details, which explain the property of the learned regularizer to recover high-frequencies.
These results clearly demonstrate that the learned TDV regularizers are discriminative priors as they adapt to specific image reconstruction tasks.

\begin{figure}
\includegraphics[width=\linewidth]{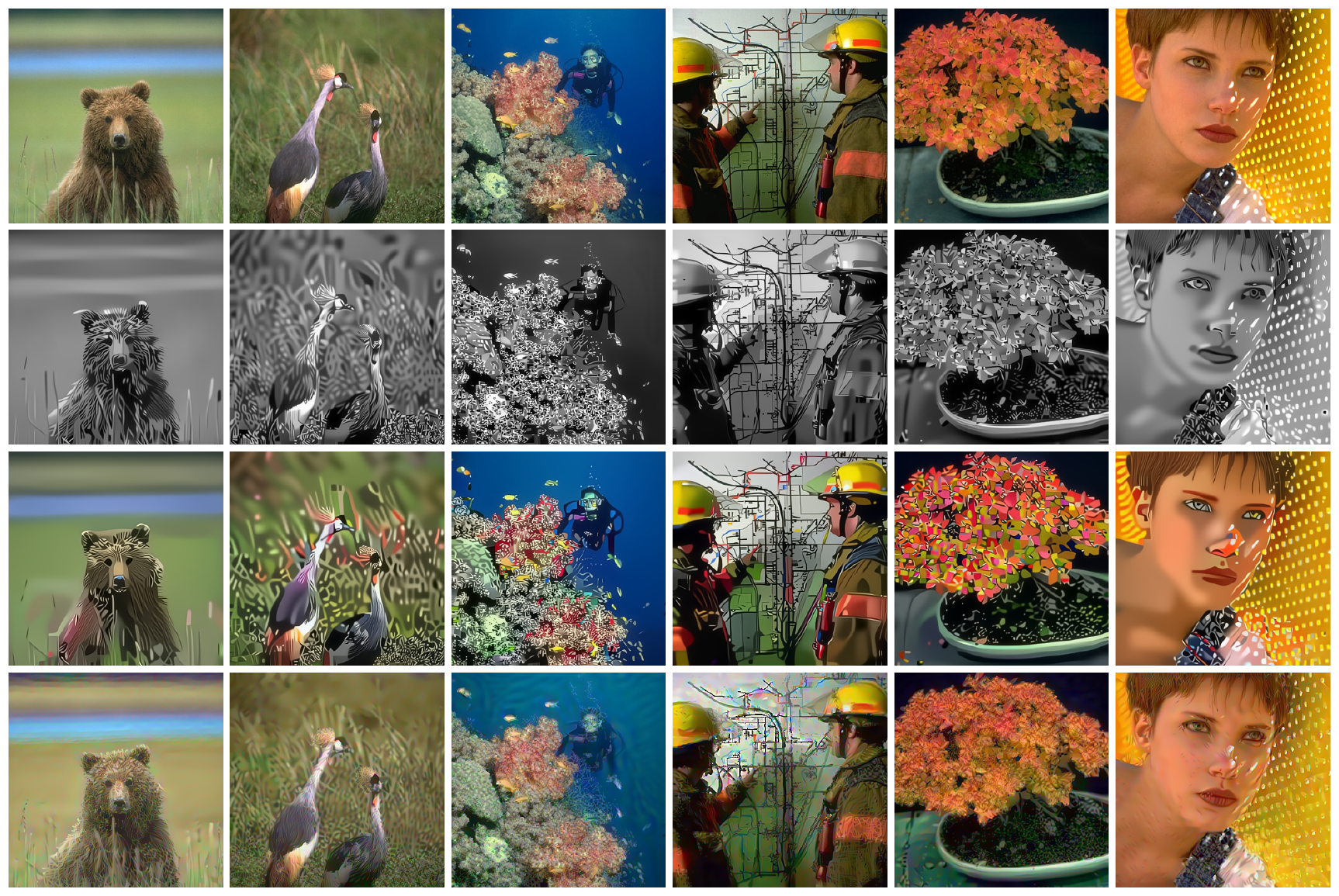}
\caption{
First row: initial images taken from BSDS400 dataset.
Second to fourth row: corresponding eigenfunctions for gray-scale denoising (second row, $\sigma=25$), color denoising (third row, $\sigma=25$) and SISR (last row, $\gamma=2$ and $\sigma=0$).
}
\label{fig:Eigenfunctions}
\end{figure}

\subsection{Stability Analysis}\label{sub:stabilityExperiments}
In what follows, we elaborate on the stability of the proposed approach w.r.t.~perturbations of the initial image and the learned parameters of TDV.
To this end, we numerically analyze the local structure of the regularization energy and experimentally validate Theorem~\ref{thm:stability} and Theorem~\ref{thm:stabilityParameters}.
In all experiments in this subsection, we use the TDV$_3^3$ regularizer.

Let $x\in\R^{nC}$ be an image, $\xi\sim\mathcal{N}(0,\sigma^2\Id)$ and $\theta$ parameters trained for gray-scale Gaussian denoising with $\sigma=25$.
Figure~\ref{fig:DenoisingEnergyLandscape} visualizes the surface plots of the pointwise deep variation $[-1,1]\ni(\zeta_1,\zeta_2)\mapsto\mathrm{r}(\zeta_1 x+\zeta_2\xi,\theta)_i$ as a function of the contrast~$\zeta_1$ and the noise level~$\zeta_2$ for four prototypic pixels~$i$ marked in red.
All surface plots exhibit distinct global minima and no high-frequency oscillations can be observed.
Moreover, the pointwise deep variation strictly increases from the origin in all directions.
\begin{figure}
\centering
\includegraphics[width=.7\linewidth]{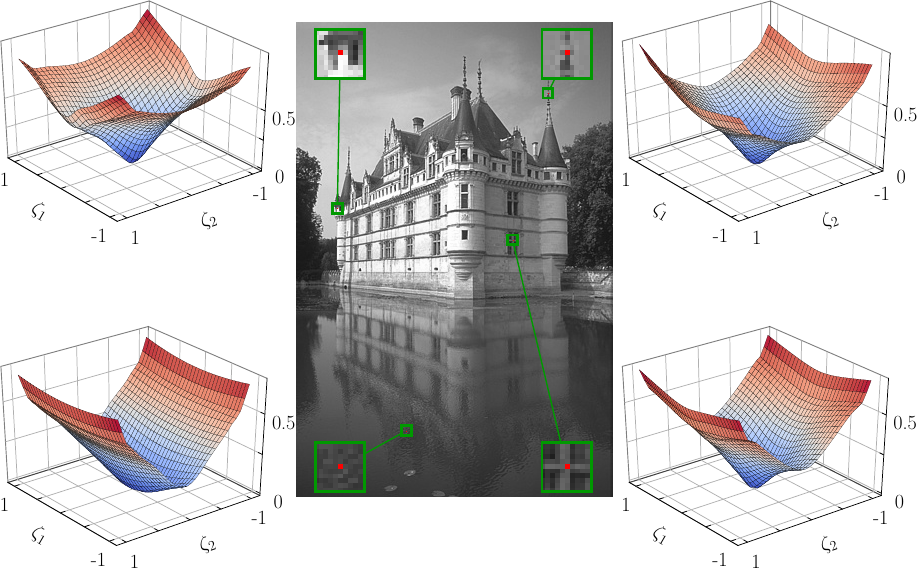}
\caption{Surface plots of the pointwise deep variation $[-1,1]\ni(\zeta_1,\zeta_2)\mapsto\mathrm{r}(\zeta_1 x+\zeta_2\xi,\theta)_i$
of four patches--each evaluated at the red center pixel.}
\label{fig:DenoisingEnergyLandscape}
\end{figure}

Motivated by the aforementioned surface plots, we can now conduct the stability analysis w.r.t.~perturbations of the input.
For this purpose, we estimate quantiles of the local Lipschitz constants~$L_x$ and~$L_\theta$ of TDV by uniformly drawing~$10^5$ patches of size~$128\times 128$ from the BSDS400 dataset.
To this end, we consider an image patch~$y$ randomly drawn from the BSDS68 dataset and let $\xi,\widetilde{\xi}$ be two noise instances independently drawn from $\mathcal{N}(0,\sigma^2\Id)$ for $\sigma=25$.
The associated observations are denoted by $z=y+\xi$ and $\widetilde{z}=y+\widetilde{\xi}$, resulting in the states~$x_s$ and~$\widetilde{x}_s$.
Figure~\ref{fig:stabilitySensitivity} (upper left) depicts the normalized norm differences~$\frac{1}{nC}\Vert x_s-\widetilde{x}_s\Vert_2$ for all $68$~patches (light blue curves), the corresponding mean curve (blue curve)
as well as the upper bounds obtained from Theorem~\ref{thm:stability} for $\delta=0.5$ (orange curve) and $\delta=0.05$ (green curve) for gray-scale additive Gaussian denoising as a function of~$s$.
It turns out that the normalized norm differences along the trajectories are only slightly overestimated.
Furthermore, the normalized norm differences strictly monotonically decrease for increasing~$s$, which is also reflected in the upper bounds due to $\alpha_1(\delta)<1$.

Next, we elaborate on the stability analysis w.r.t.~variations of the learned parameters.
Let $y$ be a randomly drawn $128\times128$-patch from the BSDS68 dataset, which is corrupted by additive Gaussian noise~$\xi$ with~$\sigma=25$, i.e.~$z=y+\xi$.
We consider optimized parameters~$\theta\in\Theta$ for gray-scale Gaussian denoising.
The TDV parameters~$\widetilde{\theta}$ satisfies $\widetilde{\theta}\sim\mathcal{U}(\proj_\Theta(B_{\epsilon}(\theta)))$ with $\epsilon=0.1$.
Hence, $\widetilde{\theta}$ is the element-wise sum of $\theta$ and strong uniform noise in the relative $\epsilon$-ball around~$\theta$.
We denote by~$x_s$ and~$\widetilde{x}_s$ two states associated with~$\theta$ and~$\widetilde{\theta}$ emanating from the same noisy observation~$z$.
In Figure~\ref{fig:stabilitySensitivity} (lower left), the normalized norm differences of the states~$x_s$ and $\widetilde{x}_s$ for all $68$~patches (light blue curves), the corresponding mean curve (blue curve)
as well as the theoretical upper bounds derived in Theorem~\ref{eq:stabilityParameters} for $\delta=0.5$ (orange curve) and $\delta=0.05$ (green curve) are plotted as a function of~$s$.
As a result, the upper bound for~$\delta=0.5$ for $s=10$ is roughly four times higher than the expected curve of the normalized norm differences.

In the case of SISR for~$\gamma=3$ and~$\sigma=7.65$, the results of the stability analysis are depicted on the right side of Figure~\ref{fig:stabilitySensitivity}.
Here, the theoretical upper bounds are less tight compared to Gaussian denoising due to the inclusion of the non-trivial linear operators~$A$ with a non-empty nullspace and $A_\init$ with $\Vert A_\init\Vert_2=\gamma$.
Note that the solutions of SISR are not unique due to the structure of~$A$.

To conclude, in all four cases the normalized norm differences (blue curves) are almost flat and the band width only slightly increases with~$s$.
This numerically validates that the proposed method is robust w.r.t.~variations of both the observations and the learned parameters.

\begin{figure}
\centering
\includegraphics[width=\linewidth]{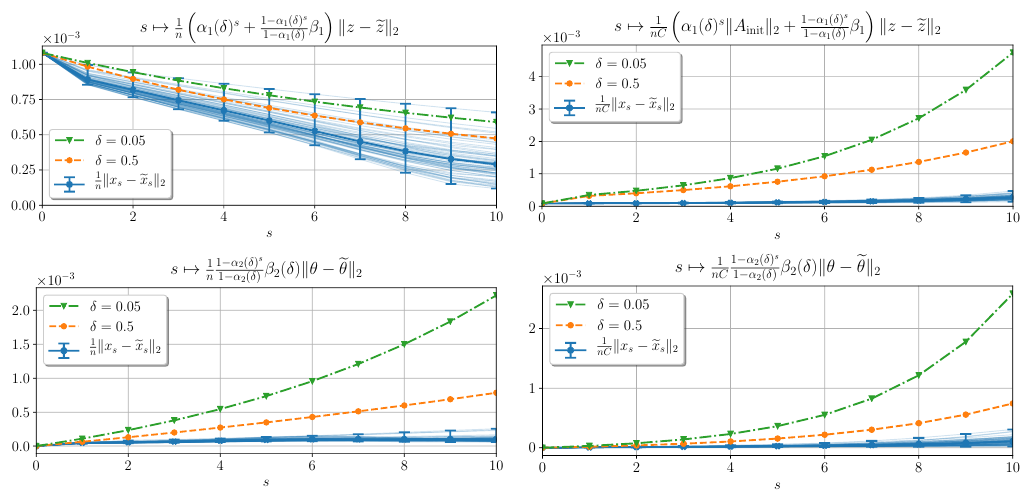}

\caption{First row: stability analysis w.r.t.~input of the proposed approach for gray-scale Gaussian denoising (left) trained for $\sigma=25$/$S=10$ and SISR (right) with $\gamma=3$/$\sigma=7.65$/$S=10$.
Second row: stability analysis w.r.t.~the learned TDV$_3^3$ parameters for gray-scale Gaussian denoising (left) and SISR (right).}
\label{fig:stabilitySensitivity}
\end{figure}

\subsection{Robustness against Adversarial Attacks}
In what follows, we numerically check the robustness of the proposed method against adversarial attacks of the TDV$_3^3$ regularizer trained for gray-scale Gaussian denoising with~$\sigma=25$.
Let $y\in\R^{nC}$ be a ground truth image patch, $\xi\sim\mathcal{N}(0,\sigma^2\Id)$ Gaussian noise and $z=y+\xi\in\R^l$.
The adversarial noise~$\widetilde{\xi}$ for $\epsilon>0$ is computed via
\[
\max_{\widetilde{\xi}\in\R^l:\Vert\widetilde{\xi}\Vert_2\leq\epsilon}\Vert\widehat{x}_S(y,\xi+\widetilde{\xi},T,\theta)-y\Vert_2^2.
\]
Thus, we seek the noise structure that leads to the largest deviation from~$y$ around~$z$.

Figure~\ref{fig:adversarial} shows two different ground truth image patches, the corresponding restored images along with the computed adversarial noise structures and corresponding output images for~$\epsilon\in\{1,2\}$.
As a result, with increasing~$\epsilon$ high-frequency patterns (dotted and striped structures) are generated in the adversarial noise, which are emphasized in the corresponding output images.
Moreover, we observe high-frequency local noise patterns in both reconstructed images for~$\epsilon=2$.
In particular, no new structures are hallucinated, only existing patterns are intensified in~$x_S(\widetilde{\xi})$.

\begin{figure}
\includegraphics[width=\linewidth]{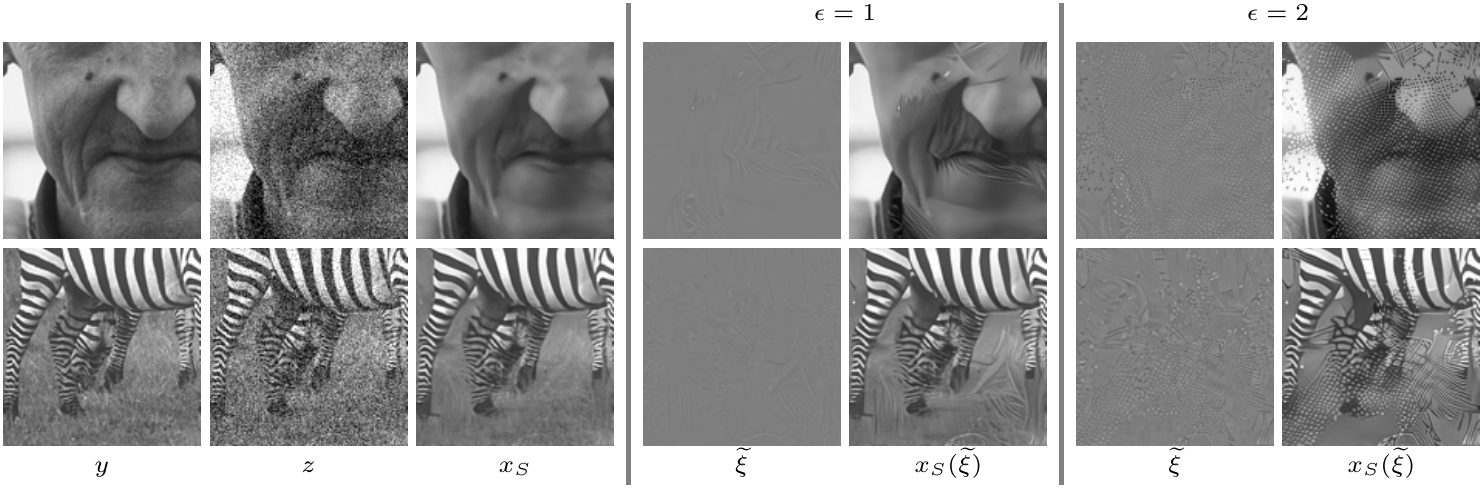}

\caption{From left to right: ground truth image patch~$y$ (first column), noisy observation~$z=y+\xi$ (second column), reconstructed image (third column),
pairs of (adversarial) noise and resulting output for radii~$\epsilon=1$ (fourth/fifth column) and $\epsilon=2$ (sixth/seventh column), where $x_S(\widetilde{\xi})=\widehat{x}_S(y,\xi+\widetilde{\xi},T,\theta)$.
The adversarial noise~$\widetilde{\xi}$ is displayed in the range~$[-0.5,0.5]$.}
\label{fig:adversarial}
\end{figure}

\subsection{Empirical Upper Bound for Generalization Error}
Next, we experimentally compute worst case upper bounds for the generalization error of the TDV$_3^3$ regularizer trained for gray-scale Gaussian denoising with $\sigma=25$.
As a starting point, let $\mathcal{Y}\subset[0,1]^n$ be the set of natural images with distribution~$\mathcal{T}_\mathcal{Y}$.
Further, let~$\mathcal{Y}'\subset\mathcal{Y}$ be a collection of $10^5$ ground truth image patches of size $128\times 128$ randomly drawn from the BSDS400 dataset, the BSDS68 dataset, and the DIV2K validation set~\cite{AgTi17}.
The uniform distribution on $\mathcal{Y}'$ is denoted by $\mathcal{T}_{\mathcal{Y}'}$.
Following~\cite{BoEl02}, the \emph{empirical risk} w.r.t.~$\mathcal{Y}'$ is defined as
\[
E_{\emp}(\mathcal{Y}')\coloneqq\frac{1}{\vert\mathcal{Y}'\vert}\sum_{y'\in\mathcal{Y}'}\loss(\widehat{x}_S(y',\xi_{y'})-y'),
\]
the \emph{expected loss} of~$\mathcal{Y}$ reads as
\[
E(\mathcal{Y})\coloneqq\E_{y\sim\mathcal{T}_\mathcal{Y}}\loss(\widehat{x}_S(y,\xi_y)-y),
\]
where $\loss(x)=\Vert x\Vert_2^2$, $\xi_y$ and $\xi_{y'}$ are a priori sampled noise instances drawn from~$\mathcal{N}(0,\sigma^2\Id)$, and $\widehat{x}_S(y,\xi)=\widehat{x}_S(y,\xi,T,\theta)$.
In this case, the \emph{generalization error} for $\mathcal{Y}'$ is defined as $\vert E(\mathcal{Y})-E_{\emp}(\mathcal{Y}')\vert$.
A worst case upper bound for this generalization error is given by
\begin{equation}\label{eq:generalizationWorstCase}
\max_{\widetilde{y}\in[0,1]^n}\loss(\widehat{x}_S(\widetilde{y},\xi)-\widetilde{y})-E_{\emp}(\mathcal{Y}')
\end{equation}
for an a priori sampled $\xi\sim\mathcal{N}(0,\sigma^2\Id)$ since the expected loss is estimated from above by its single worst realization.

To analyze the dependency between~$\loss(\widehat{x}_S(y',\xi_{y'})-y')$ and the regularization energy $\mathrm{R}(y',\theta)$, we show a scatter plot of the corresponding plane in Figure~\ref{fig:generalization} (left) for all $y'\in\mathcal{Y}'$.
We observe a strikingly linear dependency, which is reflected by an $R^2$-value of~$0.985$ of a linear regression with intercept.
This linear dependency gives rise to a probabilistic analysis of worst case upper bounds for the generalization error on quantiles of the regularization energy.
For this reason, we define the cumulative distribution function
\[
F_\mathrm{R}(H)=\Prob\left(\mathrm{R}(y',\theta)\leq H:y'\sim\mathcal{T}_{\mathcal{Y}'}\right)
\]
for $H\in\R$.
Note that $F_\mathrm{R}^{-1}(q)$ for $q\in(0,1]$ defines the $q^{th}$-quantile of the regularization energy over~$\mathcal{Y}'$.
Then, we derive an upper bound for the generalization error restricted to the subset
\[
\mathcal{Y}'_q=\{y'\in\mathcal{Y}':\mathrm{R}(y',\theta)\leq F_\mathrm{R}^{-1}(q)\}
\]
for $q\in(0,1]$.
In this setting, the expected loss of the $q^{th}$-quantile is estimated from above by $\loss(\widehat{x}_S(\widetilde{y}_q)-\widetilde{y}_q)$, where
\begin{equation}
\widetilde{y}_q\in\argmax_{y\in[0,1]^n}\loss(\widehat{x}_S(y,\xi)-y)\quad\text{s.t.}\quad\mathrm{R}(y)\leq F_\mathrm{R}^{-1}(q).
\label{eq:generalization}    
\end{equation}
In detail, we try to identify the image patch~$\widetilde{y}_q$ that leads to the worst case loss~$\loss$ among all image patches in $[0,1]^n$ such that their regularization energy~$\mathrm{R}(\widetilde{y}_q)$ is at most $F_\mathrm{R}^{-1}(q)$.
Hence, an upper bound for the generalization error on the set~$\mathcal{Y}'_q$ is given by
\[
\mathrm{G}(q)\coloneqq\loss(\widehat{x}_S(\widetilde{y}_q,\xi)-\widetilde{y}_q)-E_{\emp}(\mathcal{Y}'_q).
\]
To compute the worst case ground truth image, we account for the constraint in~\eqref{eq:generalization} by a quadratic barrier approach and solve the resulting minimization problem using Nesterov's accelerated gradient method~\cite{Ne83} with Lipschitz backtracking starting from a patch with uniform noise.

Figure~\ref{fig:generalization} (second plot) depicts the semi-logarithmic plots of the upper bound for the empirical risk (blue curve), the empirical risk (orange curve), and the upper bound for the generalization error (green curve) on the set~$\mathcal{Y}'_q$ depending on~$q$.
For convenience, the third plot depicts corresponding $\PSNR$ curves measured in dB.
We note that the upper bound for the generalization error slightly increases with larger regularization energy values represented by larger~$q$.
We observe that the upper bound is not tight, which originates from the minimization in~\eqref{eq:generalization} among all patches in $[0,1]^n$.
The computed worst case patches~$\widetilde{y}_q$ along with the reconstructed output images~$\widehat{x}_S(\widetilde{y}_q,\xi)$ are shown in Figure~\ref{fig:generalization} (right).
For low values of~$q$ the worst case ground truth patches consist of high-oscillatory stripe patterns and checkerboard artifacts, whereas noise and texture patterns are dominant for higher values of~$q$.
We emphasize that all generated patches are artificial and not likely to be contained in any natural image, that is why the upper bound for generalization error has a tendency to overestimate the actual generalization error.

\begin{figure}
\includegraphics[width=\linewidth]{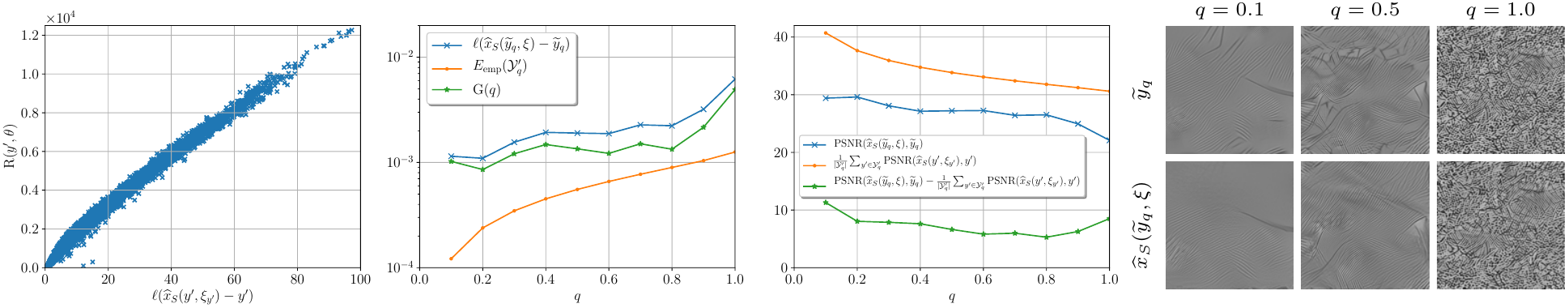}

\caption{First plot: scatter plot in the $\loss(\widehat{x}_S(y,\xi_y)-y)$ and $\mathrm{R}(y,\theta)$-plane.
Second plot: semi-logarithmic plots of the upper bound for expected loss (blue curve), the empirical risk (orange curve) and the upper bound for the generalization error (green curve) restricted to $\mathcal{Y}'_q$ as a function of the quantiles~$q$.
Third plot: corresponding $\PSNR$ curves measured in dB.
Right: pairs of worst case ground truth $\widetilde{y}_q$ and corresponding output image $\widehat{x}_S(\widetilde{y}_q,\xi)$ for $q=0.1,0.5,1.0$.
}
\label{fig:generalization}
\end{figure}

\section{Conclusion}
The proposed total deep variation regularizer is motivated by established deep network architectures.
Moreover, the inherent variational structure of our approach enables a rigorous mathematical understanding encompassing
an optimality condition for optimal stopping, and a nonlinear eigenfunction analysis.
We have derived theoretical upper bounds for the stability analysis, which led to relatively tight bounds in the numerical experiments.
For image denoising and single image super-resolution, our model generates state-of-the-art results with an impressively low number of trainable parameters.
To underline the versatility of TDV for generic linear inverse problems, we successfully demonstrated their applicability for the challenging CT and MRI reconstruction tasks without requiring any additional training.
Finally, we have conducted adversarial attacks and an empirical worst case generalization error analysis to demonstrate the robustness of our approach.

\section*{Acknowledgments}
The authors acknowledge support from the ERC starting grant HOMOVIS (No. 640156) and ERC advanced grant OCLOC (No. 668998).

\bibliographystyle{plain}
\bibliography{references.bib}

\end{document}